%% file: main.tex
\documentclass[twoside,11pt]{article}

\usepackage{blindtext}

\usepackage[abbrvbib, preprint]{jmlr2e}

\usepackage{subcaption}
\usepackage{amsmath}
\usepackage{booktabs}
\usepackage{multirow}
\usepackage{amssymb}

\newtheorem{assumption}{Assumption}

\usepackage{lastpage}
\jmlrheading{XX}{2026}{1-\pageref{LastPage}}{Submitted 3/26}{Under review}{XX-0000}{Yi Wei, Xuan Qi, Furao shen}

\ShortHeadings{A Geometric Measure of Linear Separability}{Wei, Qi, and Shen}
\firstpageno{1}

\begin{document}

\title{A Geometric Measure of Linear Separability for Neural Representations}

\author{\name Yi Wei$^{*}$ \email ywei@smail.nju.edu.cn \\
       \addr State Key Laboratory of Novel Software Technology\\
       School of Intelligence Science and Technology\\
       Nanjing University, Jiangsu, China\\
       \AND
       \name Xuan Qi$^{*}$ \email xuan.qi@iit.it \\
       \addr AI for Good (AIGO)\\
       Istituto Italiano di Tecnologia, Genoa, Italy\\
       DITEN\\
       University of Genoa, Genoa, Italy\\
       \AND
       \name Furao Shen$^{\dagger}$ \email frshen@nju.edu.cn \\
       \addr State Key Laboratory of Novel Software Technology\\
       School of Artificial Intelligence\\
       Nanjing University, Jiangsu, China}
\editor{My editor}

\maketitle
\begingroup
\renewcommand{\thefootnote}{\fnsymbol{footnote}}
\footnotetext[1]{Equal contribution. \quad $^{\dagger}$Corresponding author.}
\endgroup

\begin{abstract}
Modern neural classifiers commonly rely on linear readouts, yet predictive metrics alone do not characterize the class-wise geometry of the representations on which such readouts operate.
We introduce the directional linear separability measure (LSM), a finite-sample diagnostic for one-sided affine separability.
For a target class \(A\) and a competing set \(B\), LSM searches over affine halfspaces that contain all samples in \(A\) and measures the smallest competing-sample intrusion that must remain on the target side, normalized by \(|A|\).
The resulting quantity is asymmetric, class-wise, target-normalized, and applicable to finite representations extracted from neural networks.
We establish its supporting-hyperplane characterization, relate it to optimal affine classification accuracy, and prove invariance under full-rank linear embeddings.
These results separate changes caused by linear reparameterization from those caused by information loss or nonlinear geometric transformations.
We also give a penalty-based affine search for estimating class-wise LSM in high-dimensional features, with reported values computed from the original discrete preservation and violation criterion.
Finally, we analyze coordinatewise gated nonlinearities as finite-sample geometric operators and empirically use LSM to diagnose class-wise intrusion across common deep-learning components and architectures.
\end{abstract}

\begin{keywords}
linear separability; convex geometry; representation analysis; affine halfspaces; neural networks
\end{keywords}

\section{Introduction}
\label{sec:introduction}

Neural networks have become a central methodology in modern machine learning, enabling substantial progress in computer vision, natural language processing, speech recognition, and scientific data analysis~\cite{lecun2015deep}.
A common reason for their effectiveness is their ability to transform raw inputs into internal representations that expose task-relevant structure.
This perspective is central to representation learning: a useful representation should make the factors needed for prediction accessible to comparatively simple downstream rules~\cite{bengio2013representation}.
In modern neural classifiers, this principle is often instantiated by a deep feature extractor followed by a simple readout, typically an affine map together with a prediction rule.
For classification, affine hyperplanes are among the simplest decision rules, and their role has been fundamental since the perceptron and classical analyses of linear dichotomies~\cite{rosenblatt1958perceptron,cover1965geometrical}.
The widespread use of linear probes to analyze intermediate neural representations follows the same logic: a representation is often regarded as more informative when label information can be more easily recovered by a linear classifier~\cite{alain2016understanding}.

Predictive performance, however, does not fully describe the geometry of a representation.
Accuracy summarizes the behavior of a classifier, but it does not specify how class-conditional samples are arranged in feature space.
Two representations can have similar accuracy while exhibiting different geometric structures.
For example, samples from one class may lie inside the convex hull of another class, a high-accuracy affine classifier may sacrifice a small but coherent subset of a target class, or one class may be substantially harder to isolate than another in a one-vs-rest sense.
Such differences are especially relevant when analyzing finite representations produced by common deep-learning components and architectures, where the goal is not only to know whether the final prediction is correct, but also to understand how class-wise geometry evolves through the network.

This motivates a one-sided notion of linear separability.
Given a target class \(A\) and a competing set \(B\) in a fixed representation space, suppose that an affine halfspace is required to contain every sample in \(A\).
How many samples from \(B\) must remain on the same side as \(A\)?
This question differs from standard linear classification.
An accuracy-optimized affine classifier may trade errors between the two classes, whereas the one-sided formulation imposes a hard preservation constraint on the target class and counts only the resulting competing-sample intrusion.
It is also directional: protecting \(A\) from \(B\) need not be equivalent to protecting \(B\) from \(A\).

We formalize this idea through the directional linear separability measure (LSM).
Let \(A,B\subset\mathbb{R}^d\) be finite samples, where \(A\) is the target class and \(B\) is the competing set.
We consider affine functions \(f\) satisfying \(f(x)\ge 0\) for all \(x\in A\).
For any such \(f\), the violations are the samples in \(B\) that also satisfy \(f(x)\ge 0\).
The directional LSM is defined as
\[
    s(A,B)
    =
    1-
    \min_{f\in\mathcal{F}_A}
    \frac{|\{x\in B:f(x)\ge 0\}|}{|A|},
\]
where \(\mathcal{F}_A\) is the family of affine functions that preserve all samples in \(A\).
The measure extends naturally to multi-class classification through a one-vs-rest construction, in which each class is treated as the target set and the union of all other classes is treated as the competing set.

The normalization by \(|A|\) makes LSM a target-normalized intrusion score rather than an accuracy-like probability.
Its range is
\[
    1-\frac{|B|}{|A|}
    \le s(A,B)\le 1.
\]
A value of \(1\) means that all target samples can be preserved while all competing samples are excluded.
Lower values indicate increasing intrusion, and negative values can occur when the number of unavoidable competing samples exceeds the size of the target class.
This behavior is intentional in one-vs-rest representation analysis, where the competing set can be much larger than the target class.
The score therefore measures the amount of intrusion relative to the protected class, rather than the fraction of correctly classified samples in the whole binary task.

The measure has a direct convex-geometric interpretation.
Because all target samples must be preserved, any feasible affine halfspace can be tightened until its boundary supports the convex hull of the target set.
We prove that the optimum defining \(s(A,B)\) can be attained by a supporting hyperplane of \(\operatorname{conv}(A)\).
Thus, directional LSM can be viewed as a finite-sample functional over supporting halfspaces of the target convex hull.
This viewpoint also clarifies the source of violations.
Samples of \(B\) lying inside \(\operatorname{conv}(A)\) cannot be excluded by any target-preserving affine halfspace, whereas outside-hull violations depend on the supporting direction.
Accordingly, LSM quantifies the amount of one-sided affine nonseparability, while the associated decomposition of violations identifies the geometric type of difficult competing samples.

Directional LSM is related to, but distinct from, optimal affine classification accuracy.
Accuracy allows errors on both classes, while LSM imposes the hard constraint that no target sample may be sacrificed.
As a result, high affine accuracy does not necessarily imply high one-sided separability for every class.
Conversely, LSM can reveal class-specific intrusion patterns that are hidden by aggregate predictive metrics.
We formalize this relation by comparing LSM with optimal affine classification accuracy and by identifying the linearly separable case in which the two notions coincide.
This comparison also explains why LSM is not intended to replace accuracy: it answers a different geometric question about finite representations.

A key property of directional LSM is invariance under full-rank linear embeddings.
Changes of coordinates and full-rank linear reparameterizations preserve the affine sign patterns relevant to the measure, and therefore cannot change the LSM value.
Consequently, changes in LSM must arise from information loss or from transformations that alter the finite-sample convex geometry.
This invariance separates the role of linear maps from that of nonlinear representation transformations.
While full-rank linear maps preserve directional LSM, coordinatewise nonlinearities can change convex-combination relations among finite samples and may therefore change one-sided separability.

We study this contrast for generalized gated linear units, a class that includes ReLU-, GELU-, and SiLU-type activations as representative examples~\cite{nair2010rectified,hendrycks2016gelu,ramachandran2018searching}.
Our analysis treats such activations as finite-sample geometric operators rather than as sources of generalization guarantees.
Under an admissible structural condition, we decompose each coordinate into regions determined by the gating nonlinearity and derive region-stratified perturbation bounds.
These bounds explain how coordinatewise nonlinearities may preserve already separated competing samples, recover certain outside-hull violations, or move some in-hull samples outside the transformed target convex hull.
The resulting sufficient conditions identify data configurations in which a gated nonlinearity improves directional LSM.
They do not claim that such nonlinearities improve separability for arbitrary finite samples.

Exact optimization of LSM can be challenging in high-dimensional neural representations because the objective involves a discrete violation count under a hard target-preservation constraint.
We therefore formulate class-wise LSM estimation as a constrained one-vs-rest affine search and use a penalty-based surrogate to find candidate boundaries.
The surrogate is used only as an optimization device.
The reported LSM value is computed from the original discrete criterion by checking whether all target samples are preserved and by counting the competing samples that remain on the target side.
This distinction keeps the empirical quantity aligned with the finite-sample definition of LSM, while making the diagnostic applicable to high-dimensional features extracted from neural networks.

Empirically, we use LSM to analyze finite representations produced by common deep-learning components and architectures.
The experiments are designed to complement predictive metrics rather than to replace them.
They examine how class-wise intrusion changes across representation transformations, how one-sided separability differs from aggregate accuracy, and how nonlinear components can modify the geometry relevant to target-preserving affine separation.
This provides a practical diagnostic for studying whether a representation makes each class easier or harder to isolate in a one-vs-rest sense.

Our contributions are as follows.
\begin{itemize}
    \item We introduce the directional linear separability measure (LSM) for finite neural representations.
    The measure is asymmetric, class-wise, and target-normalized: for a target class, it counts the unavoidable competing-sample intrusion under a hard affine target-preservation constraint.
    We further extend the measure to multi-class classification through a one-vs-rest construction.

    \item We establish basic geometric and statistical properties of LSM.
    These include its finite-sample range, a supporting-hyperplane characterization over the target convex hull, its relation to optimal affine classification accuracy, and its invariance under full-rank linear embeddings.

    \item We formulate high-dimensional LSM estimation as a constrained one-vs-rest affine search.
    To apply the diagnostic to neural representations, we use a penalty-based surrogate to find candidate affine boundaries.
    The reported LSM values are then computed using the original discrete target-preservation condition and competing-sample violation count.

    \item We analyze and empirically evaluate how common representation transformations affect directional separability.
    In particular, we contrast full-rank linear embeddings with coordinatewise gated nonlinearities, derive finite-sample sufficient conditions under which such nonlinearities can alter or improve one-sided separability, and use LSM as a class-wise diagnostic across common deep-learning components and architectures.
\end{itemize}

The proposed measure is not intended to replace accuracy, loss, margin, or probe-based analysis.
Rather, it provides an additional geometric lens for studying finite neural representations.
By focusing on target-preserving affine halfspaces and one-sided intrusion, directional LSM characterizes how class-wise linear separability changes across representation transformations and offers a concrete way to connect affine readout behavior with finite-sample convex geometry.

\section{Related Work}
\label{sec:related-work}

\paragraph{Linear separability, affine classification, and convex geometry.}
Linear separability is a classical foundation of pattern recognition. 
The perceptron introduced affine decision rules as a basic mechanism for binary classification~\cite{rosenblatt1958perceptron}, and Cover's analysis of linear inequalities clarified how separability of finite point configurations depends on dimension and sample arrangement~\cite{cover1965geometrical}. 
Fisher's linear discriminant analysis studies class separation through projections that optimize the ratio of between-class to within-class scatter~\cite{fisher1936use}. 
Support-vector machines optimize large-margin affine classifiers and handle nonseparable data through slack variables and kernel mappings~\cite{cortes1995support}. 
The geometry of support-vector machines is also closely connected to convex hulls and reduced convex hulls~\cite{bennett2000duality}.

Our objective is different from these classical classification objectives. 
We do not seek a predictive classifier, a maximum-margin separator, or a scatter-optimized projection. 
Instead, we define a finite-sample diagnostic for a fixed representation. 
The diagnostic imposes a one-sided hard constraint that all target samples remain in the target halfspace and then counts how many competing samples cannot be excluded. 
Thus, the proposed quantity is closer to a class-wise convex-geometric set functional than to a classifier-training criterion.

The connection between affine separability and convex geometry is classical: two finite point sets are linearly separable when their convex hulls can be separated by a hyperplane~\cite{rockafellar1970convex,boyd2004convex}. 
This observation underlies our supporting-hyperplane characterization. 
Because the target set must be preserved exactly, any feasible affine halfspace can be tightened to support \(\operatorname{conv}(A)\). 
The resulting optimization is therefore naturally expressed in terms of supporting halfspaces of the target convex hull. 
This one-sided viewpoint differs from symmetric binary separability, where both classes play interchangeable roles.

\paragraph{Data complexity and separability indices.}
A related line of work studies data-complexity measures for supervised classification. 
Ho and Basu proposed geometric, neighborhood, and boundary-based measures for characterizing classification difficulty beyond test error~\cite{ho2002complexity}. 
Subsequent surveys organized such measures into broader families and emphasized their use in meta-learning, dataset characterization, and classifier selection~\cite{lorena2019complexity}. 
These measures share with our work the motivation that aggregate predictive performance is not sufficient to describe the structure of a classification problem.

The proposed LSM differs in both object and directionality. 
Many data-complexity measures are designed as global descriptors of a dataset or binary task. 
In contrast, LSM is explicitly class-wise and one-sided. 
For each target class, it asks how many non-target samples remain on the target side when the entire target class is protected by an affine halfspace. 
This makes the measure particularly suitable for one-vs-rest analysis of multi-class neural representations, where different classes can have different geometric difficulty even under the same model and dataset.

\paragraph{Halfspace depth and one-sided halfspace counting.}
Our measure is also related in spirit to statistical notions based on halfspaces. 
Tukey's halfspace depth measures the centrality of a point by considering the smallest probability mass contained in a closed halfspace that includes the point~\cite{tukey1975mathematics}. 
Related work on data depth and outlyingness uses halfspace-based constructions for robust multivariate analysis~\cite{donoho1992breakdown}. 
Both halfspace depth and LSM involve counting samples under halfspace constraints.

The object being measured, however, is different. 
Halfspace depth evaluates the centrality or outlyingness of individual points with respect to a distribution or empirical sample. 
LSM evaluates a finite target set relative to a competing set. 
It searches over affine halfspaces that contain all target samples and counts the competing samples that remain in those halfspaces. 
Thus, LSM is a target-set separability diagnostic rather than a depth statistic for individual points.

\paragraph{Representation learning and linear probes.}
Representation learning aims to transform raw data into feature spaces in which relevant factors are more accessible to downstream predictors~\cite{bengio2013representation}. 
Linear probes have become a common empirical tool for evaluating whether information is linearly recoverable from intermediate neural representations~\cite{alain2016understanding}. 
However, probe accuracy is still the accuracy of a fitted classifier. 
It can depend on probe capacity, regularization, optimization details, and the design of control tasks~\cite{hewitt2019designing}. 
Information-theoretic and minimum-description-length views of probing further emphasize that accuracy alone may not capture the effort required to extract information from a representation~\cite{pimentel2020information,voita2020information}. 
Broader surveys of probing have also discussed methodological limitations and interpretation challenges~\cite{belinkov2022probing}.

LSM is complementary to probing. 
Rather than fitting a probe and reporting its predictive performance, LSM quantifies a target-preserving convex-geometric property of the finite representation. 
It does not ask whether some affine classifier can achieve high accuracy by trading off errors across classes. 
Instead, it asks how much competing-sample intrusion remains when no target sample is allowed to be sacrificed. 
This gives a different view of linear accessibility, especially in class-wise analyses of intermediate layers.

\paragraph{Representation similarity and neural representation geometry.}
Another major approach to representation analysis compares feature spaces across layers, architectures, or training runs. 
SVCCA and related CCA-based methods measure similarity between neural representations and have been used to study learning dynamics and layer behavior~\cite{raghu2017svcca}. 
Projection-weighted CCA further analyzes representational similarity and generality across networks~\cite{morcos2018insights}. 
CKA provides a robust similarity measure that has become widely used for comparing neural network representations~\cite{kornblith2019similarity}. 
These methods answer a different question from ours. 
They compare two representation spaces, whereas LSM evaluates the class-wise separability geometry within a single fixed representation.

Neural collapse studies reveal another form of representation geometry. 
They show that, under certain training regimes, last-layer features and classifiers can approach highly structured class-mean configurations during the terminal phase of training~\cite{papyan2020prevalence}. 
Our setting is more general and more finite-sample oriented. 
We consider arbitrary representation layers, do not assume terminal-phase training, and do not reduce the geometry to class means or simplex equiangular tight-frame structure. 
The proposed measure instead counts finite-sample one-sided intrusion for each target class.

\paragraph{Theoretical analyses of deep networks.}
A large body of theory studies neural networks through capacity, optimization, margin, and generalization. 
Examples include norm- and margin-based generalization bounds~\cite{bartlett2017spectrally,neyshabur2017exploring}, analyses of memorization and generalization behavior~\cite{zhang2017understanding}, neural tangent kernel limits for wide networks~\cite{jacot2018neural}, and expressivity results based on the number of linear regions induced by piecewise-linear networks~\cite{montufar2014number,raghu2017expressive}. 
These works primarily concern parameterized function classes, training dynamics, or generalization guarantees.

The proposed measure has a different purpose. 
It is not a generalization bound and is not intended to predict test performance by itself. 
It is a diagnostic quantity for the empirical geometry of a fixed finite representation. 
Its invariance under full-rank linear embeddings also clarifies that it is insensitive to linear reparameterizations that preserve affine sign patterns on the sample. 
Changes in LSM therefore indicate either loss of linear information, changes in finite-sample configuration, or nonlinear transformations that alter the relevant convex geometry.

\paragraph{Linear separability measures for hidden layers.}
The closest line of work studies deep networks through explicit measures of hidden-layer linear separability. 
In particular,~\citet{zhang2023understanding} proposed Minkowski-difference-based linear separability measures for hidden-layer outputs and related changes in separability to training behavior under specified assumptions. 
Their construction is symmetric in the two point sets and relies on pairwise-difference geometry.

Our formulation is complementary. 
We define a one-sided, target-normalized measure based on affine halfspaces that must contain the target class. 
This design is motivated by multi-class representation analysis, where the geometric difficulty of one class need not match that of another. 
In this setting, aggregate accuracy or symmetric separability can hide class-specific intrusion. 
LSM directly exposes this intrusion by measuring each target class against the union of the remaining classes.

\paragraph{Nonlinear transformations and activation functions.}
The proposed measure is invariant under full-rank linear embeddings, so changes of coordinates or full-rank linear reparameterizations do not alter the score. 
This property separates linear reparameterization from nonlinear transformations that can modify the finite-sample convex geometry of a representation. 
Piecewise-linear and gated nonlinearities such as ReLU, GELU, and SiLU/Swish are standard components of modern neural networks~\cite{nair2010rectified,hendrycks2016gelu,ramachandran2018searching}.

Motivated by this contrast, we analyze generalized gated linear units as coordinatewise nonlinear maps. 
Such maps can move samples in a region-dependent manner and thereby alter convex-combination relations among finite samples. 
This provides a geometric mechanism through which nonlinear representation layers can affect one-sided separability, in contrast to full-rank linear embeddings that preserve the affine sign patterns measured by LSM.

\section{Method}
\label{sec:method}
\input{method}

\section{Experiment}
\subsection{Experimental Verification of the Geometric Theory}
We use synthetic finite point sets to verify the geometric mechanisms developed above.
The experiments are designed to test the theoretical quantities in Definition~\ref{def:d_lsm}, Definition~\ref{def:subsets_B}, Lemma~\ref{lem:gglu_region_box}, Proposition~\ref{prop:gglu_preserve_region_robust}, Proposition~\ref{prop:gglu_joint_Bb_Bout}, and Theorem~\ref{thm:gglu_lsm_improvement}.
For a given affine boundary \(f\in\mathcal F_A\), we decompose \(B\) into \(B_b(f)\), \(B_{in}(f)\), and \(B_{out}(f)\), and compare the geometry of \(A,B\) before and after the GGLU map.
We write \(\bar A=\rho_\sigma(A)\) and \(\bar B=\rho_\sigma(B)\), and evaluate the change from \(s(A,B)\) to \(s(\bar A,\bar B)\).

Unless otherwise stated, we use SiLU, \(\rho(t)=t/(1+e^{-t})\), as the GGLU activation.
SiLU satisfies Assumption~\ref{assu:gglu} and has a nontrivial negative-well structure with minimizer \(\tau\) and minimum value \(\mu=\rho(\tau)\).
The regions \(I_+=[0,+\infty)\), \(I_0=[\tau,0)\), and \(I_-=(-\infty,\tau)\) are used to construct the region-stratified perturbation boxes in Lemma~\ref{lem:gglu_region_box}.

Since our theory gives sufficient but not necessary conditions, each experiment includes both favorable and non-favorable configurations.
Favorable configurations are constructed to satisfy the proposed robust margin or convex-combination defect conditions.
Non-favorable configurations are used as controls: some may still succeed after GGLU, showing that the conditions are not necessary, while others fail, showing that GGLU does not improve separability for arbitrary data.

We record both certified quantities and actual geometric outcomes.
The certified quantities include \(U_u(A)\), \(L_u(B)\), the joint margin for \(B_b(f)\cup B_{out}(f)\), and the GGLU-induced defect \(E_\lambda(b)\).
The actual outcomes include linear separability after GGLU, escape from \(\operatorname{conv}(\bar A)\), and the change in directional LSM.

\paragraph{Experiment 1: Joint recovery of out-of-hull samples.}

We next verify how the region-stratified box condition can guide the construction of GGLU-favorable data.
We use the same anchor set \(A=\{(0.75,0),(0,0.75),(-0.8,0.1),(0.1,-0.8)\}\).
The affine boundary is chosen as \(f(x)=0.75-x_1-x_2\), whose boundary \(H_0:x_1+x_2=0.75\) is a supporting hyperplane of \(A\).
We take \(u=(1,1)\), and the corresponding worst-case upper projection of \(A\) is \(U_u(A)=0.75\).
Thus the certified construction rule is to choose a set \(S\) such that \(L_u(S)>U_u(A)=0.75\).
Geometrically, this means that even under the worst admissible region-stratified GGLU displacement, the set \(S\) remains on the opposite side of the supporting boundary \(H_0\).

The experiment contains two types of samples.
The first type is \(B_b(f)\), which consists of samples already separated from \(A\) by the original boundary \(f\).
The second type is \(B_{out}(f)\), which consists of samples satisfying \(f\ge0\) but lying outside \(\operatorname{conv}(A)\), and hence are out-of-hull samples misclassified by \(f\).
For each type, we include certified, uncertified, and failure configurations to illustrate sufficiency and non-necessity of the box condition.

The certified preserved set \(B_b^{\mathrm{cert}}(f)\) satisfies \(L_u(B_b^{\mathrm{cert}}(f))>0.75\) and remains separated after SiLU.
The uncertified preserved set \(B_b^{\mathrm{unc}}(f)\) does not satisfy the certified box condition, but its actual SiLU image is still separated from \(\rho(A)\).
The failed preserved set \(B_b^{\mathrm{fail}}(f)\) satisfies \(f<0\) before SiLU, but is not separated from \(\rho(A)\) after SiLU.
Similarly, \(B_{out}^{\mathrm{cert}}(f)\) is constructed to satisfy \(L_u(B_{out}^{\mathrm{cert}}(f))>0.75\), and is therefore certified to be jointly recovered with \(B_b(f)\).
The set \(B_{out}^{\mathrm{unc}}(f)\) is not certified by the box condition, but is still recovered by the actual SiLU transformation.
The set \(B_{out}^{\mathrm{fail}}(f)\) is neither certified nor recovered.

For actual post-SiLU verification, we use \(\bar q(x)=\rho(x_1)+\rho(x_2)\).
The support value of \(\rho(A)\) along \(u=(1,1)\) is \(h_{\rho(A)}(u)=\rho(0.75)\approx0.509\).
Therefore, a transformed point is on the recovered side if \(\bar q(x)>0.509\).
Table~\ref{tab:scenario3_conditions} reports both the pointwise box lower values \(L_u(x)\) and the actual SiLU projections \(\bar q(x)\).
The bold entries indicate the set-level minima that determine \(L_u(S)\) and \(\min\bar q(S)\).

\begin{table}[t]
    \centering
    \scriptsize
    \caption{
        Pointwise and set-level checks for the joint recovery experiment.
        Here \(u=(1,1)\), \(U_u(A)=0.75\), and \(h_{\rho(A)}(u)=\rho(0.75)\approx0.509\).
        For each point \(x\), \(L_u(x)\) denotes the tightened pointwise lower projection obtained from the region-stratified displacement box together with the activated-value range constraint, and \(\bar q(x)=\rho(x_1)+\rho(x_2)\).
        The bold values indicate the set-level minima \(L_u(S)=\min_{x\in S}L_u(x)\) and \(\min\bar q(S)=\min_{x\in S}\bar q(x)\).
    }
    \label{tab:scenario3_conditions}
    \begin{tabular}{c|c|c|c|c|c|c|c}
        \hline
        Set & Point & Region & Role & \(L_u(x)\) & \(L_u(S)>0.75\) & \(\bar q(x)\) & \(\min\bar q(S)>0.509\) \\
        \hline
        \multirow{4}{*}{\(B_b^{\mathrm{cert}}(f)\)}
        & \((1.20,-0.10)\) & \(I_+\times I_0\) & \multirow{4}{*}{Certified preserved}
        & \(\mathbf{0.822}\) & \multirow{4}{*}{\(\checkmark\)}
        & \(\mathbf{0.875}\) & \multirow{4}{*}{\(\checkmark\)} \\
        & \((-0.10,1.20)\) & \(I_0\times I_+\)
        &  & \(\mathbf{0.822}\) &  & \(\mathbf{0.875}\) &  \\
        & \((0.90,0.50)\) & \(I_+\times I_+\)
        &  & \(0.843\) &  & \(0.951\) &  \\
        & \((2.20,-1.35)\) & \(I_+\times I_-\)
        &  & \(1.643\) &  & \(1.703\) &  \\
        \hline
        \multirow{3}{*}{\(B_b^{\mathrm{unc}}(f)\)}
        & \((0.90,-0.10)\) & \(I_+\times I_0\) & \multirow{3}{*}{Uncertified preserved}
        & \(0.522\) & \multirow{3}{*}{\(\times\)}
        & \(0.592\) & \multirow{3}{*}{\(\checkmark\)} \\
        & \((-0.10,0.90)\) & \(I_0\times I_+\)
        &  & \(0.522\) &  & \(0.592\) &  \\
        & \((0.76,0.02)\) & \(I_+\times I_+\)
        &  & \(\mathbf{0.482}\) &  & \(\mathbf{0.528}\) &  \\
        \hline
        \multirow{3}{*}{\(B_b^{\mathrm{fail}}(f)\)}
        & \((0.70,0.06)\) & \(I_+\times I_+\) & \multirow{3}{*}{Failed preservation}
        & \(0.422\) & \multirow{3}{*}{\(\times\)}
        & \(0.499\) & \multirow{3}{*}{\(\times\)} \\
        & \((0.45,0.31)\) & \(I_+\times I_+\)
        &  & \(\mathbf{0.203}\) &  & \(\mathbf{0.454}\) &  \\
        & \((0.06,0.70)\) & \(I_+\times I_+\)
        &  & \(0.422\) &  & \(0.499\) &  \\
        \hline
        \multirow{3}{*}{\(B_{out}^{\mathrm{cert}}(f)\)}
        & \((1.95,-1.40)\) & \(I_+\times I_-\) & \multirow{3}{*}{Certified recovery}
        & \(1.393\) & \multirow{3}{*}{\(\checkmark\)}
        & \(1.430\) & \multirow{3}{*}{\(\checkmark\)} \\
        & \((-1.40,1.95)\) & \(I_-\times I_+\)
        &  & \(1.393\) &  & \(1.430\) &  \\
        & \((1.75,-1.30)\) & \(I_+\times I_-\)
        &  & \(\mathbf{1.193}\) &  & \(\mathbf{1.213}\) &  \\
        \hline
        \multirow{3}{*}{\(B_{out}^{\mathrm{unc}}(f)\)}
        & \((0.85,-0.12)\) & \(I_+\times I_0\) & \multirow{3}{*}{Uncertified recovery}
        & \(\mathbf{0.452}\) & \multirow{3}{*}{\(\times\)}
        & \(\mathbf{0.539}\) & \multirow{3}{*}{\(\checkmark\)} \\
        & \((-0.12,0.85)\) & \(I_0\times I_+\)
        &  & \(\mathbf{0.452}\) &  & \(\mathbf{0.539}\) &  \\
        & \((1.55,-0.90)\) & \(I_+\times I_0\)
        &  & \(0.993\) &  & \(1.018\) &  \\
        \hline
        \multirow{3}{*}{\(B_{out}^{\mathrm{fail}}(f)\)}
        & \((0.74,-0.02)\) & \(I_+\times I_0\) & \multirow{3}{*}{Failure}
        & \(0.442\) & \multirow{3}{*}{\(\times\)}
        & \(0.491\) & \multirow{3}{*}{\(\times\)} \\
        & \((0.60,-0.20)\) & \(I_+\times I_0\)
        &  & \(0.122\) &  & \(0.297\) &  \\
        & \((-1.40,0.10)\) & \(I_-\times I_+\)
        &  & \(\mathbf{-0.279}\) &  & \(\mathbf{-0.224}\) &  \\
        \hline
    \end{tabular}
\end{table}

\paragraph{Experiment 2: GGLU-induced escape from the transformed convex hull.}

We next verify the in-hull escape mechanism described in Proposition~\ref{prop:gglu_defect_mechanism}.
As shown in Figure~\ref{fig:scenario_2}, the set A form a triangular convex hull \(\operatorname{conv}(A)\) in the original space.
Both the Escape point and the Failure point lie inside \(\operatorname{conv}(A)\) before applying SiLU.
After the coordinatewise SiLU transformation, the anchor convex hull is changed to \(\operatorname{conv}(\rho(A))\).
The Escape point moves outside \(\operatorname{conv}(\rho(A))\), showing that the GGLU-induced convex-combination defect can push an in-hull sample outside the transformed convex hull.
In contrast, the Failure point remains inside \(\operatorname{conv}(\rho(A))\), showing that not every in-hull sample escapes after SiLU.
This experiment illustrates that the escape mechanism is distribution-dependent and is governed by the coordinate-region pattern of the points rather than by a pre-fixed separating direction.

\begin{figure}[t]
    \centering
    \includegraphics[width=0.72\textwidth]{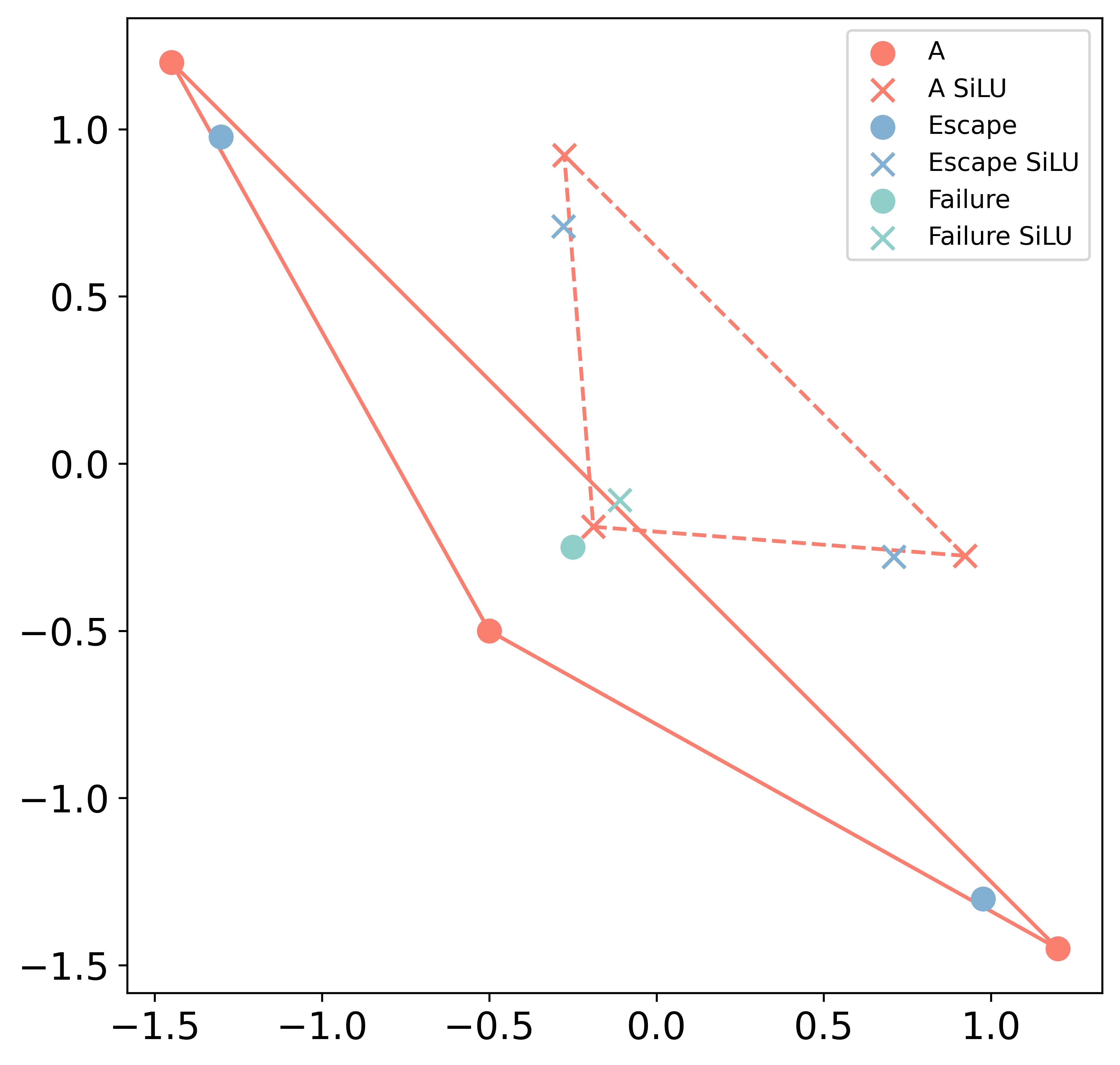}
    \caption{
    Scenario 2: GGLU-induced escape from the transformed convex hull.
    The solid polygon represents \(\operatorname{conv}(A)\), and the dashed polygon represents \(\operatorname{conv}(\rho(A))\).
    Both test points lie inside \(\operatorname{conv}(A)\) before SiLU.
    After SiLU, the Escape point leaves \(\operatorname{conv}(\rho(A))\), while the Failure point remains inside it.
    }
    \label{fig:scenario_2}
\end{figure}


\paragraph{Experimental analysis.}

The above experiments show that the proposed sufficient conditions can be used not only for verification, but also for constructing GGLU-favorable data distributions.
In particular, by choosing \(B_b(f)\) and \(B_{out}(f)\) according to the box-guided condition \(L_u(S)>U_u(A)\), one can construct datasets satisfying the assumptions of Theorem~\ref{thm:gglu_lsm_improvement}.
For such datasets, the number of class-\(b\) samples classified as class \(a\) does not increase after GGLU, and recovered samples contribute to the improvement of the directional LSM.
Thus the experiments provide concrete examples where \(s(\bar A,\bar B)\ge s(A,B)\), and strict improvement occurs when some samples in \(B_{out}(f)\) or \(R_{in}\subseteq B_{in}(f)\) are recovered.

The box-guided experiment also confirms that the proposed condition is sufficient but not necessary.
The certified samples satisfy \(L_u(S)>U_u(A)\), and hence their preservation or recovery is guaranteed by the theory.
The uncertified samples do not satisfy this worst-case box condition, but some of them are still separated after the actual SiLU transformation.
These samples usually lie closer to the separating boundary, where the conservative box estimate may be too coarse.
This gap is related to the constants \(\kappa_+\), \(\tau\), and \(\mu\), which determine the size of the admissible GGLU displacement in different coordinate regions.

The experiments further support Proposition~\ref{prop:linear_gglu_distribution_improvement}.
A full-rank linear map does not directly improve the directional LSM, but it can change the coordinate-region pattern of the data before GGLU.
Therefore, the role of the linear layer is to place samples into regions such as \(I_+\), \(I_0\), and \(I_-\), where the GGLU displacement becomes favorable for preservation, recovery, or convex-hull escape.
Our theory thus provides a geometric way to describe GGLU-favorable distributions.
It shows how the data distribution is transformed through coordinatewise region changes, and how these changes affect linear separability after activation.
In this sense, the sufficient conditions offer guidance for the linear transformation before GGLU: the linear layer should generate coordinates that enlarge the box-guided margin, move recoverable samples into favorable region patterns, or create convex-combination defects for in-hull samples.
Thus linear transformations, including dimension-increasing embeddings, act as coordinate generators for GGLU-favorable distributions.

Combining the preservation/recovery experiment with the convex-hull escape experiment, we also observe that some out-of-hull samples may fail to be recovered in one layer, and some transformed samples may remain inside or move into the transformed convex hull.
This does not contradict the improvement mechanism, because the theorem only requires a subset of difficult samples to be recovered while preserving the already separated samples.
From a network perspective, such remaining difficult samples may be further processed in later layers.
Thus depth can be interpreted as providing multiple opportunities to move samples from \(B_{in}\) to \(B_{out}\), and then from \(B_{out}\) to the separated region.
This interpretation is still distribution-dependent.

Finally, the experiments indicate that GGLU-favorable distributions are not determined by a single coordinate or a simple one-dimensional rule.
The separability improvement depends on the joint region pattern across coordinates and on the margin relative to the separating boundary.
In higher dimensions, it is difficult to explicitly characterize all linear transformations that produce favorable GGLU patterns.
Nevertheless, the sufficient conditions and the experiments suggest a useful principle: samples with a positive margin under the box-guided criterion are more likely to remain separable or become separable after GGLU, while samples close to the boundary require more delicate analysis.

\subsection{Empirical Evaluation of Directional Linear Separability}
The preceding experiments verify the proposed geometric mechanisms under controlled synthetic settings.
These results show that the directional LSM can capture how GGLU affects linear separability, and that the box-guided conditions can be used to construct GGLU-favorable distributions.
We now move from theoretical constructions to empirical evaluation on both synthetic and real datasets.

\paragraph{Synthetic LSM analysis under increasing class overlap.}

We first use two-dimensional synthetic datasets to illustrate how the directional LSM changes under different levels of class overlap.
As shown in Figure~\ref{fig:toy-blob}, when the two classes are well separated, both the directional LSM and the classification accuracy are close to one.
As the noise level increases and the two clusters overlap more strongly, the values of \(s(A)\), \(s(B)\), and the classification accuracy decrease.
Moreover, the observed results are consistent with Theorem~\ref{thm:acc_lsm}.
This experiment provides an intuitive visualization of the relationship between data geometry, directional linear separability, and classification performance.

\begin{figure*}[ht]
    \centering
    \setlength{\tabcolsep}{6pt}
    \renewcommand{\arraystretch}{0}
    \begin{tabular}{@{}cc@{}}
        \begin{subfigure}[t]{0.41\textwidth}
            \centering
            \includegraphics[width=\textwidth]{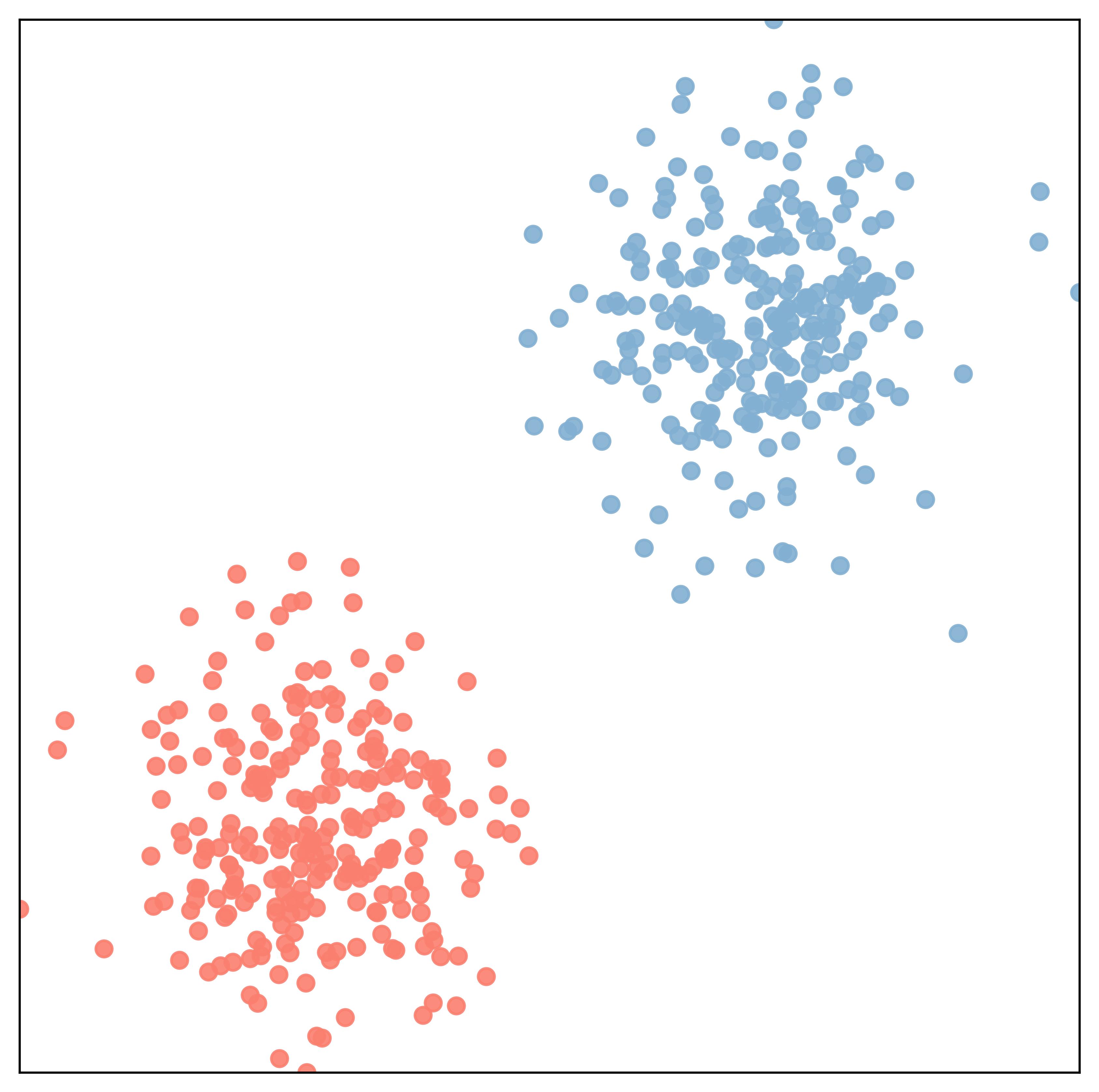}
            \caption{$s(A)=1,s(B)=1,Acc=1$}
            \label{fig:blob-0.4}
        \end{subfigure} &
        \begin{subfigure}[t]{0.41\textwidth}
            \centering
            \includegraphics[width=\textwidth]{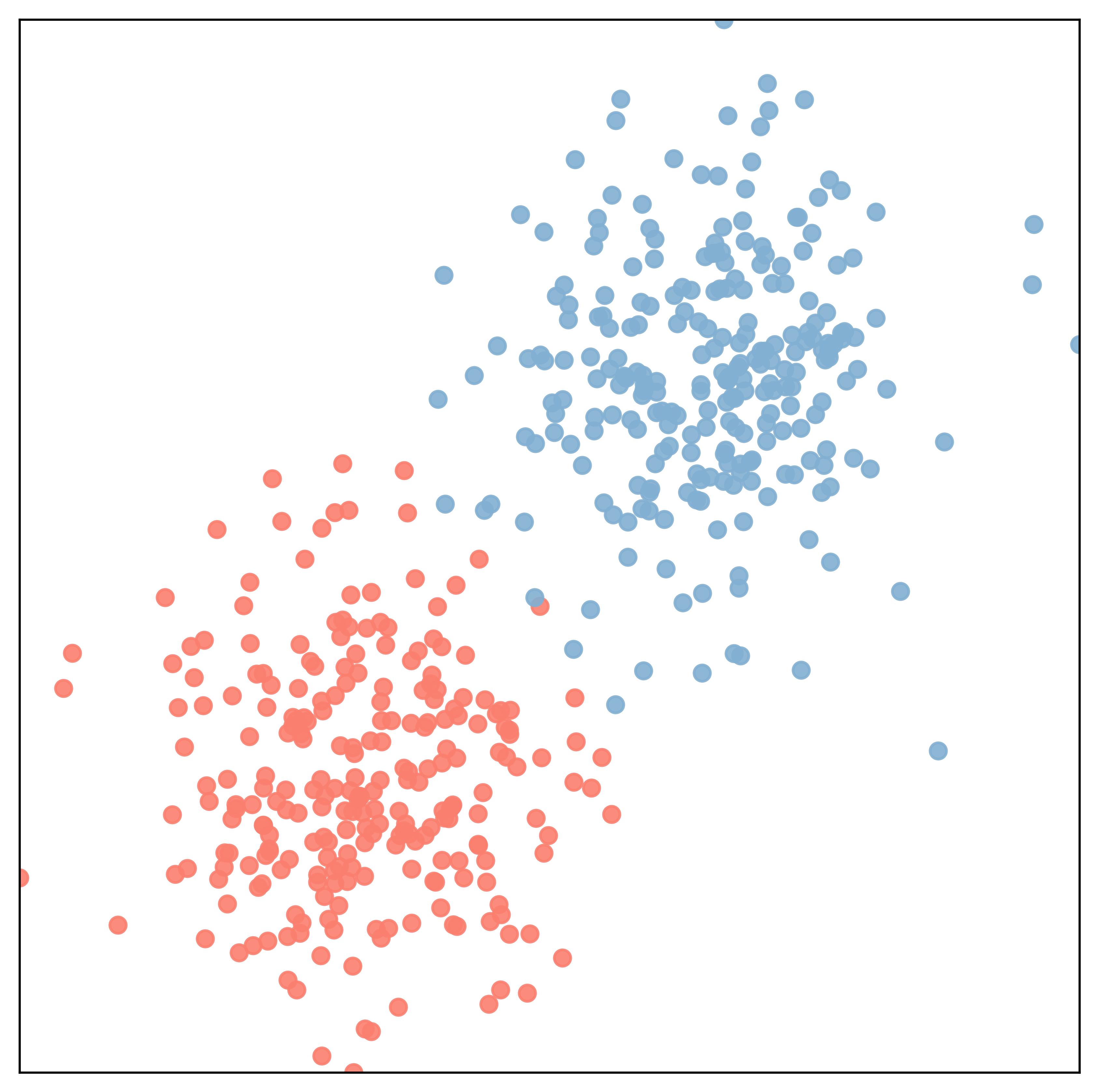}
            \caption{$s(A)=0.996,s(B)=0.996,\\ Acc=0.998$}
            \label{fig:blob-0.6}
        \end{subfigure}
    \end{tabular}
    \begin{tabular}{@{}cc@{}}
        \begin{subfigure}[t]{0.41\textwidth}
            \centering
            \includegraphics[width=\textwidth]{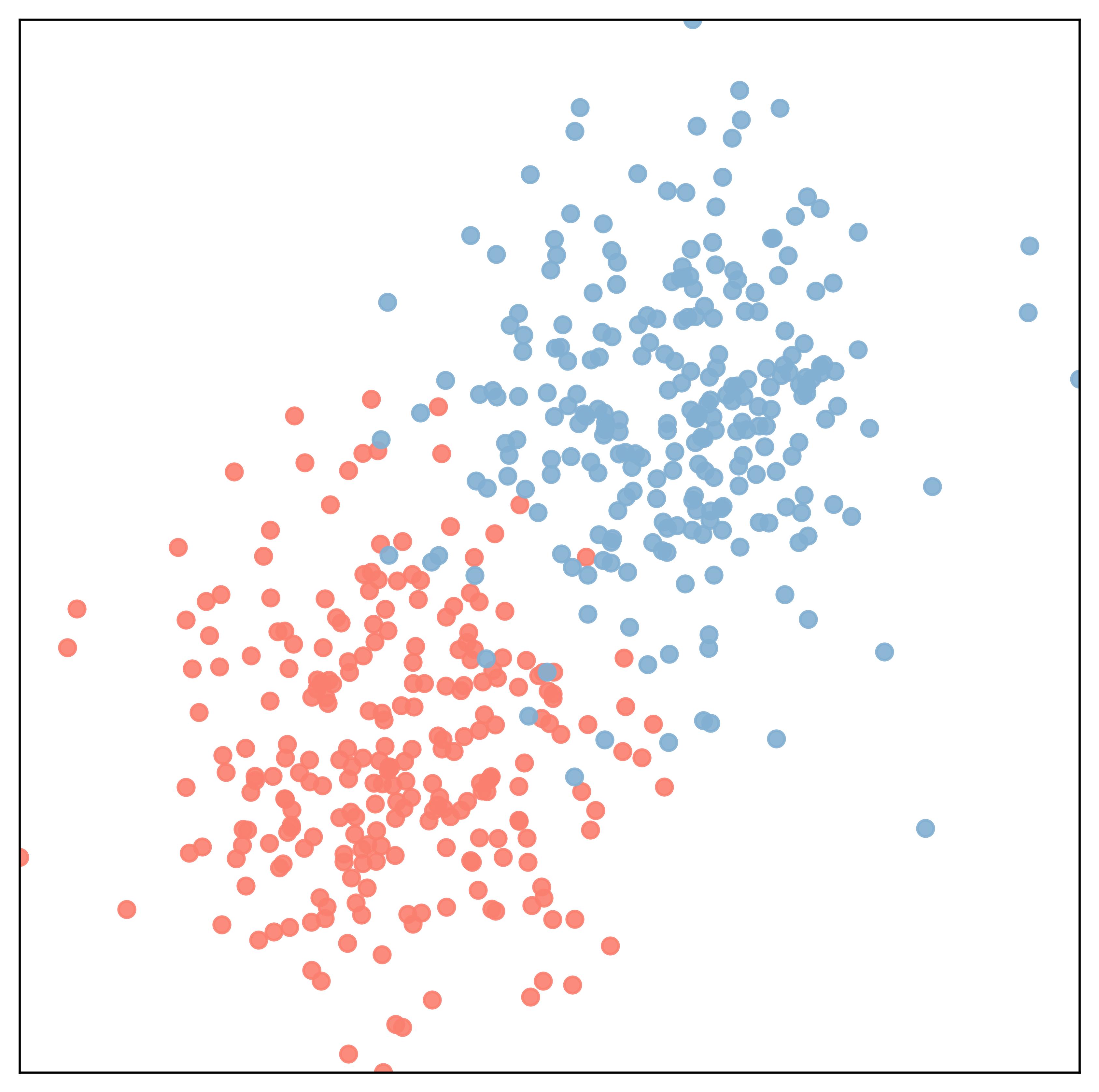}
            \caption{$s(A)=0.896,s(B)=0.820,\\ Acc=0.968$}
            \label{fig:blob-0.8}
            \end{subfigure} &
        \begin{subfigure}[t]{0.41\textwidth}
            \centering
            \includegraphics[width=\textwidth]{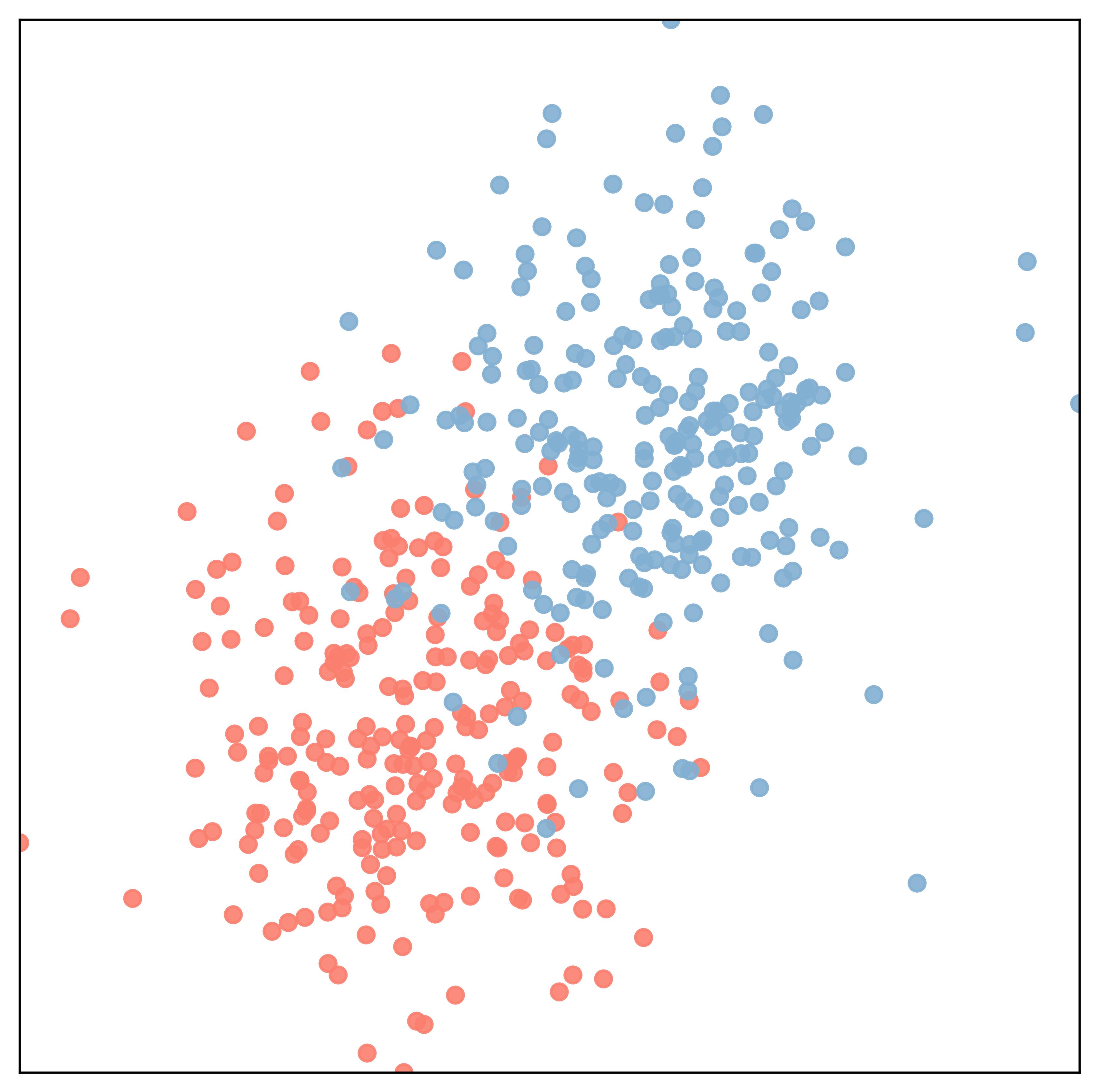}
            \caption{$s(A)=0.752,s(B)=0.608,\\ Acc=0.932$}
            \label{fig:blob-1}
        \end{subfigure}
    \end{tabular}
    \caption{Two-dimensional synthetic datasets with increasing overlap.
    As the class overlap increases, the directional separability measures \(s(A)\), \(s(B)\), and the classification accuracy decrease.}
    \label{fig:toy-blob}
\end{figure*}

\paragraph{Layer-wise LSM analysis on ImageNet.}
We then evaluate the directional LSM on ImageNet-based classification features.
Specifically, we use representative neural network architectures, including ResNet and Vision Transformer, to extract feature representations and compute the corresponding separability measures.
The goal is not to analyze the optimization procedure in detail, but to examine whether the proposed LSM can serve as a meaningful geometric indicator for learned representations.
Through these experiments, we investigate how different model architectures reshape the feature space and how their representations differ in terms of directional linear separability.


We further evaluate the directional LSM on TinyImageNet using ResNet-18.
To better visualize the values, we apply the monotone transformation \(\exp(s-1)\), so that values closer to \(1\) indicate stronger directional separability.
We compute the separability measure at the output of the first max-pooling layer and after each residual block.
The last point corresponds to the representation after the final average-pooling layer.

As shown in Figure~\ref{fig:resnet18_tiny_lsm}, only a few residual blocks are sufficient to make the training features nearly linearly separable.
After the early blocks, the transformed LSM values rapidly approach \(1\), indicating that the feature representation has already become highly separable.
However, the final average-pooling operation may reduce the separability measure, especially at early training epochs.
This is reasonable because average pooling performs a strong dimensionality reduction.
Although it extracts compact high-level features for classification, it may also remove some linearly separable information present in the spatial feature maps.

Another observation is that the LSM values of deep residual blocks are nearly identical across layers once the network has entered the highly separable regime.
This phenomenon can be interpreted from the residual structure.
A residual block updates its input by adding a learned residual branch to the identity path.
When the residual branch produces relatively small changes, the block output remains geometrically close to its input representation.
Moreover, when downsampling is implemented through a full-rank or nearly full-rank linear transformation, Theorem~\ref{thm:linear_trans} suggests that such a linear embedding does not directly change the directional LSM.
Therefore, after the representation has already become highly separable, deeper residual blocks may only slightly refine the feature geometry under this measure.

This does not mean that deeper blocks are useless.
Rather, it indicates that once the training representation is almost linearly separable, the directional LSM becomes close to saturation and may be less sensitive to further refinements.
Deeper models can still improve the final classifier by adjusting decision boundaries, improving feature robustness, and enhancing generalization.
Thus, the experiment suggests that the main role of early and middle layers is to transform the data into a highly separable representation, while later layers may provide finer feature adjustment rather than a large additional increase in the LSM.
This layer-wise behavior is consistent with our theoretical view: linear operations mainly reshape or embed the representation, while nonlinear blocks progressively move the data toward distributions with stronger directional separability.

\begin{figure}[ht]
  \centering
  \includegraphics[width=0.72\textwidth]{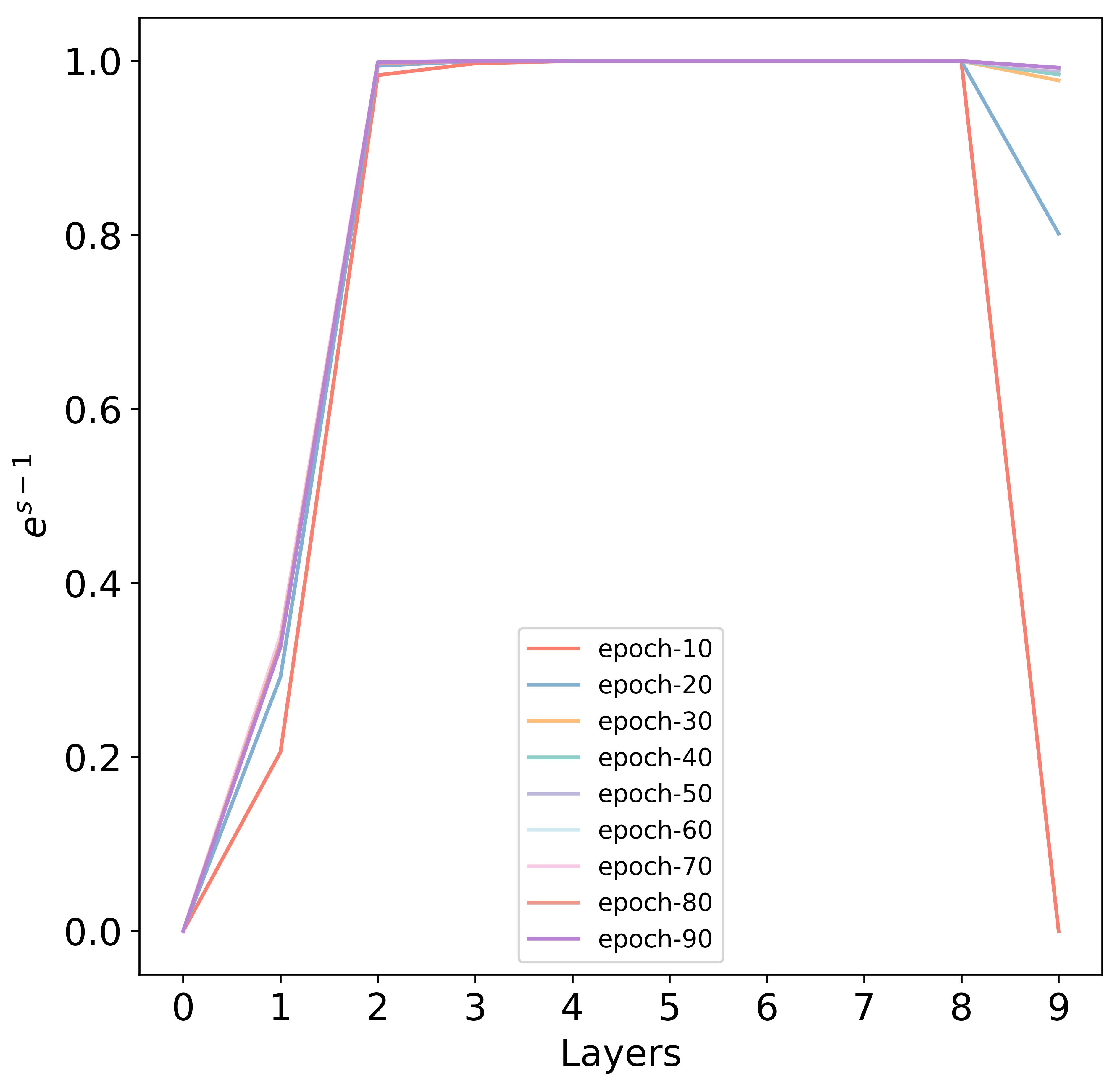}
  \caption{
      Layer-wise directional LSM of ResNet-18 on TinyImageNet.
      The value is visualized by the monotone transformation \(\exp(s-1)\).
      The LSM rapidly approaches \(1\) after a few residual blocks, suggesting that the intermediate features become nearly linearly separable.
      The final average-pooling layer may reduce the measure due to dimensionality reduction.
  }
  \label{fig:resnet18_tiny_lsm}
\end{figure}



\section{Conclusion}
\label{sec:conclusion}

This paper introduced the directional linear separability measure as a geometric tool for quantifying class-wise affine separability.
We showed that this measure can be optimized through supporting hyperplanes and is invariant under full-rank linear embeddings.
These results provide a basis for analyzing how neural-network transformations affect linear separability.

We then studied GGLU activations under an admissible structural assumption.
By decomposing coordinates into the regions \(I_+\), \(I_0\), and \(I_-\), we derived region-stratified perturbation boxes and activated-value range constraints.
This framework explains how GGLU changes data geometry through bounded positive-side contractions, negative-region pullbacks, and convex-combination defects.
Based on these mechanisms, we identified sufficient conditions under which GGLU preserves already separated samples, pushes certain in-hull samples outside the transformed convex hull, and jointly recovers out-of-hull misclassified samples.
Consequently, Theorem~\ref{thm:gglu_lsm_improvement} establishes the existence of GGLU-favorable data distributions for which the directional LSM improves.

The experiments confirm the constructive role of the theory.
The certified examples show that the box-guided conditions can be used to design data configurations that are guaranteed to remain separable or become separable after GGLU.
The uncertified examples show that the conditions are sufficient but not necessary.
The failure examples show that GGLU does not improve separability for arbitrary data.
Thus, the proposed conditions provide an interpretable geometric guide rather than a universal separability guarantee.

For neural networks, our results suggest that linear layers and GGLU activations play complementary roles.
A full-rank linear map preserves the directional LSM, but it can change the coordinate-region pattern of the data.
The GGLU activation can then improve separability when the projected data become GGLU-favorable.
Therefore, linear layers may be interpreted as coordinate generators, while GGLU acts as a region-dependent geometric operator.
This provides a distribution-dependent perspective on the roles of width and depth: width supplies more projection coordinates, and depth provides repeated opportunities to transform difficult samples into more separable configurations.

Future work will study how to choose or learn linear transformations that produce GGLU-favorable distributions.
It will also investigate whether sufficiently wide or deep networks can realize the three favorable mechanisms identified in this paper under specific assumptions on the data distribution.
This may further connect the proposed LSM framework with geometric views of high-dimensional data, including the manifold hypothesis.

\newpage

\vskip 0.2in
\bibliography{sample}

\end{document}

%% file: method.tex
\subsection{Directional Linear Separability Measure}
\label{subsec:d_lsm}

\begin{definition}[Directional Linear Separability Measure (LSM)]
  \label{def:d_lsm}
  Let \(X\subseteq\mathbb{R}^d\times\{a,b\}\) be a finite labeled dataset, where \(A=\{x\mid (x,a)\in X\}\) and \(B=\{x\mid (x,b)\in X\}\) denote samples from classes \(a\) and \(b\), respectively, and assume that \(A\neq\emptyset\).
  Consider the class of affine functions \(\mathcal{F}_A=\{f:\mathbb{R}^d\to\mathbb{R}\mid f(x)\ge0,\ \forall x\in A\}\), where each \(f\in\mathcal{F}_A\) classifies all samples in \(A\) as class \(a\).
  For a given \(f\), let \(B_a(f)=\{x\in B\mid f(x)\ge0\}\) be the subset of class-\(b\) samples classified as class \(a\).
  The separability score induced by \(f\) is defined as
  \[s_f(A,B)=1-\frac{|B_a(f)|}{|A|}.\]
  We define the directional linear separability measure from \(A\) to \(B\), denoted by \(s(A,B)\), as
  \[
    s(A,B)
    = \max_{f\in\mathcal{F}_A}s_f(A,B)
    = 1-\min_{f\in\mathcal{F}_A}\frac{|B_a(f)|}{|A|}.
  \]
\end{definition}
Here, \(|\cdot|\) denotes the cardinality of a finite set, and the assumption \(A\neq\emptyset\) ensures that the normalization by \(|A|\) is well defined.
Since \(B\) is finite, \(|B_a(f)|\) can only take finitely many values in \(\{0,1,\ldots,|B|\}\), and since \(\mathcal{F}_A\) is nonempty, for instance containing the constant affine function \(f(x)=1\), the minimum over \(f\in\mathcal{F}_A\) exists.

Intuitively, \(s(A,B)\) measures how well class \(A\) can be separated from class \(B\) by an affine boundary, under the directional constraint that all samples in \(A\) must remain on the class side \(a\).
The term ``directional'' emphasizes that the roles of \(A\) and \(B\) are not symmetric: the boundary is required to preserve all samples in \(A\), while only the violations from \(B\) are counted. Thus, in general, \(s(A,B)\neq s(B,A)\).
When the reference set \(B\) is clear from the context, we simply write \(s(A)\) instead of \(s(A,B)\).

The binary measure \(s(A,B)\) naturally extends to multi-class classification through a one-vs-rest formulation, where each class is treated as the target set and the union of all other classes serves as the reference set.
\begin{definition}[Multi-class Directional Linear Separability Measure]
  \label{def:multi_class}
  Let \(X\subseteq\mathbb{R}^d\times Y\) be a finite labeled dataset with \(Y=\{1,\ldots,K\}\).
  For each \(i\in Y\), let \(A_i=\{x\mid (x,i)\in X\}\) and \(B_i=\bigcup_{j\in Y,\ j\neq i}A_j\).
  The one-vs-rest linear separability measure of class \(i\) is defined as
  \[
    s(A_i)=s(A_i,B_i)
    =
    \max_{f\in\mathcal{F}_{A_i}}
    \left(1-\frac{|B_i(f)|}{|A_i|}\right),
  \]
  where \(\mathcal{F}_{A_i}=\{f:\mathbb{R}^d\to\mathbb{R}\mid f \text{ is affine and } f(x)\ge0,\ \forall x\in A_i\}\), and \(B_i(f)=\{x\in B_i\mid f(x)\ge0\}\) denotes the subset of non-\(i\) samples
  placed on the class-\(i\) side by \(f\).
\end{definition}

Here, \(B_i\) denotes the set of all non-\(i\) samples, whereas \(B_i(f)\) denotes the subset of \(B_i\) that violates the affine boundary induced by \(f\).

\subsection{Supporting Hyperplanes}
\label{subsec:supporting-hyperplanes}
The LSM is defined by optimizing over all affine half-spaces that contain the target sample.
The next result shows that it is sufficient to consider half-spaces whose boundary supports the convex hull of the target sample.

\begin{definition}[Supporting Hyperplanes of a Finite Point Set]
  \label{def:sup_h}
  Let \(A=\{x_1,\ldots,x_m\}\subset\mathbb{R}^d\) be a finite point set, and let \(\operatorname{conv}(A)\) denote its convex hull.
  Define \(A_{\mathrm{bd}}=A\cap \partial\operatorname{conv}(A)\) as the subset of points in \(A\) that lie on the boundary of \(\operatorname{conv}(A)\).
  For any \(x_i\in A_{\mathrm{bd}}\), a hyperplane
  \[
    h(a,x_i)=\{z\in\mathbb{R}^d\mid a^\top z=a^\top x_i\}
  \]
  is called a supporting hyperplane of \(A\) at \(x_i\) if \(a\neq 0\) and \(a^\top x_i\ge a^\top x\) for all \(x\in A\).
  The set of all supporting hyperplanes of \(A\) at \(x_i\) is defined as
  \[
    \mathcal{H}_i(A)
    =
    \left\{
      h(a,x_i)
      \,\middle|\,
      a\in\mathbb{R}^d\setminus\{0\},\
      a^\top x_i\ge a^\top x,\ \forall x\in A
    \right\}.
  \]
  The set of all supporting hyperplanes of \(A\) is then defined as
  \[
    \mathcal{H}(A)
    =
    \bigcup_{x_i\in A_{\mathrm{bd}}}\mathcal{H}_i(A).
  \]
\end{definition}

\begin{theorem}[Optimality of Supporting Hyperplanes]
  \label{thm:optim_sup_h}
  Let \(A,B\subset\mathbb{R}^d\) be finite point sets with \(A\neq\emptyset\), and let \(\mathcal{H}(A)\) denote the set of supporting hyperplanes of \(A\).
  Then the directional linear separability measure \(s(A,B)\) can be attained by a supporting hyperplane of \(A\).
  Equivalently,
  \[
    s(A,B)=\max_{h\in\mathcal{H}(A)} s_h(A,B),
  \]
  where, for \(h=H(a,x_i)\in\mathcal{H}(A)\), the induced affine function is
  \(f_h(x)=a^\top x_i-a^\top x\), and
  \[
    s_h(A,B)=1-\frac{|\{x\in B\mid f_h(x)\ge0\}|}{|A|}.
  \]
\end{theorem}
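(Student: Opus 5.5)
The plan is to prove the two inequalities \(\max_{h\in\mathcal{H}(A)} s_h(A,B)\le s(A,B)\) and \(s(A,B)\le \max_{h\in\mathcal{H}(A)} s_h(A,B)\) separately. Throughout I assume \(d\ge 1\).

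\textbf{Easy direction.} Let \(h=h(a,x_i)\in\mathcal{H}(A)\). Its induced function \(f_h(x)=a^\top x_i-a^\top x\) is affine. By the defining inequality \(a^\top x_i\ge a^\top x\) for all \(x\in A\), it satisfies \(f_h(x)\ge 0\) on \(A\). Hence \(f_h\in\mathcal{F}_A\) and \(s_h(A,B)=s_{f_h}(A,B)\le s(A,B)\). The set \(\mathcal{H}(A)\) is infinite, but \(s_h\) takes only finitely many values, so the maximum over \(\mathcal{H}(A)\) exists once \(\mathcal{H}(A)\) is shown to be nonempty. That follows from the construction in the next step.

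\textbf{Main direction: a tightening argument.} Take an optimal \(f\in\mathcal{F}_A\), which exists by the remark after Definition~\ref{def:d_lsm}, and write \(f(x)=c-a^\top x\).

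If \(a\neq 0\), choose \(x_i\in\arg\max_{x\in A}a^\top x\), which exists since \(A\) is finite and nonempty. Because \(f(x_i)\ge 0\), we have \(c\ge a^\top x_i\). Therefore \(f_h(x)=a^\top x_i-a^\top x\le f(x)\) for every \(x\in\mathbb{R}^d\), which gives \(\{x\in B\mid f_h(x)\ge0\}\subseteq B_a(f)\). So \(s_h(A,B)\ge s_f(A,B)=s(A,B)\) for \(h=h(a,x_i)\).

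It remains to check that \(h\) is admissible, i.e.\ that \(x_i\in A_{\mathrm{bd}}\). Since \(a^\top(x_i+ta)=a^\top x_i+t\|a\|^2>\max_{x\in\operatorname{conv}(A)}a^\top x\) for every \(t>0\), the points \(x_i+ta\) lie outside \(\operatorname{conv}(A)\). They converge to \(x_i\) as \(t\to 0^+\), so \(x_i\in\partial\operatorname{conv}(A)\).

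\textbf{Degenerate case \(a=0\).} Here \(f\equiv c\) with \(c\ge 0\), so \(B_a(f)=B\) and \(s(A,B)=1-|B|/|A|\). Pick any nonzero \(a\) and apply the same construction to obtain some \(h\in\mathcal{H}(A)\). Trivially \(s_h(A,B)\ge 1-|B|/|A|\). This construction also shows \(\mathcal{H}(A)\neq\emptyset\), which the easy direction needed.

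\textbf{Expected obstacle.} The only delicate points are verifying that the maximizer \(x_i\) genuinely lies in \(A_{\mathrm{bd}}\), so that \(h\) belongs to \(\mathcal{H}(A)\) as defined, and handling the constant (\(a=0\)) feasible functions. Both are settled by the arguments above.
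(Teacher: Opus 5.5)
Your proposal is correct and follows the paper's own tightening argument: keep the normal of a feasible \(f\), shift the offset until the hyperplane touches the extreme point of \(A\) in that direction, and observe that the resulting supporting-hyperplane function is pointwise dominated by \(f\), so its violation set is contained in \(B_a(f)\). Your additional checks are all handled correctly, and the paper's proof leaves them implicit: the reverse inequality, the fact that the touching point lies in \(A_{\mathrm{bd}}\), the constant case \(a=0\), and the nonemptiness of \(\mathcal{H}(A)\).
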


\begin{proof}
  Consider any feasible affine function \(f(x)=w^\top x+c\) such that \(f(x)\ge0\) for all \(x\in A\).
  If \(w\neq0\), let \(m=\min_{x\in A} w^\top x\).
  Since \(A\) is finite, there exists \(x_i\in A\) such that \(w^\top x_i=m\).
  The hyperplane
  \[
    h=\{z\in\mathbb{R}^d\mid w^\top z=m\}
  \]
  is a supporting hyperplane of \(A\), because \(w^\top z\ge m\) for all \(z\in A\). Define the induced affine function \(g(x)=w^\top x-m\).
  Then \(g(x)\ge0\) for all \(x\in A\), and \(g(x_i)=0\).
  Moreover, since \(f(x)\ge0\) for all \(x\in A\), we have \(c\ge -m\).
  Hence, for any \(x\in B\),
  \[
    g(x)\ge0 \implies f(x)=w^\top x+c\ge w^\top x-m=g(x)\ge0.
  \]
  Therefore, \(\{x\in B\mid g(x)\ge0\}\subseteq \{x\in B\mid f(x)\ge0\}\), so the supporting hyperplane \(h\) induces no more violations from \(B\) than \(f\).
  Thus, replacing any feasible affine function by a supporting hyperplane of \(A\) cannot decrease the separability score.
  Consequently, the maximum defining \(s(A,B)\) can be attained by some supporting hyperplane of \(A\).
\end{proof}

\begin{figure*}[ht]
  \centering
  \setlength{\tabcolsep}{6pt}
  \renewcommand{\arraystretch}{0}
  \begin{tabular}{@{}cc@{}}
    \begin{subfigure}[t]{0.41\textwidth}
      \centering
      \includegraphics[width=\textwidth]{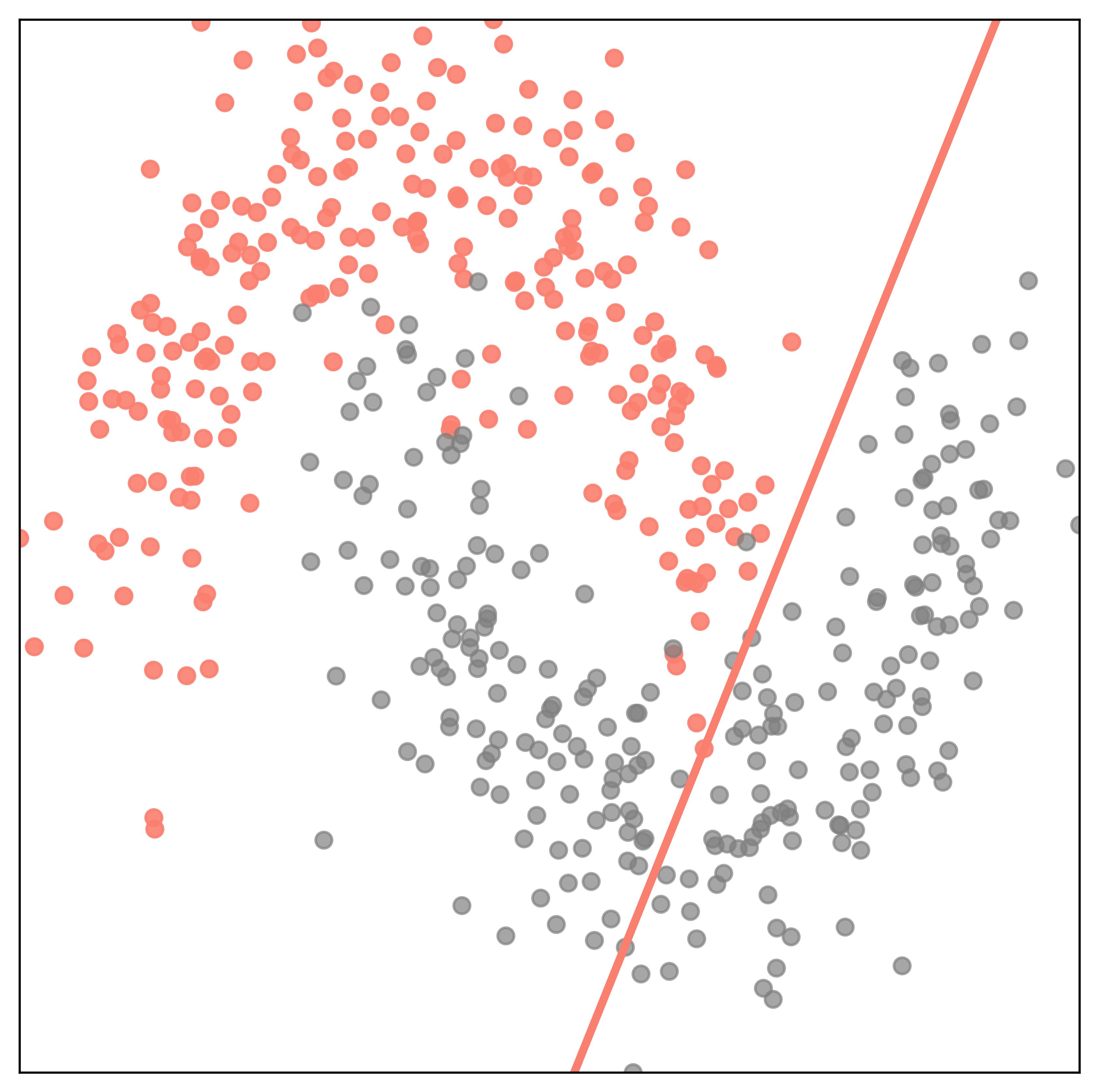}
      \caption{\(s(A, B)=0.511\)}
      \label{fig:sup_A}
    \end{subfigure} &
    \begin{subfigure}[t]{0.41\textwidth}
      \centering
      \includegraphics[width=\textwidth]{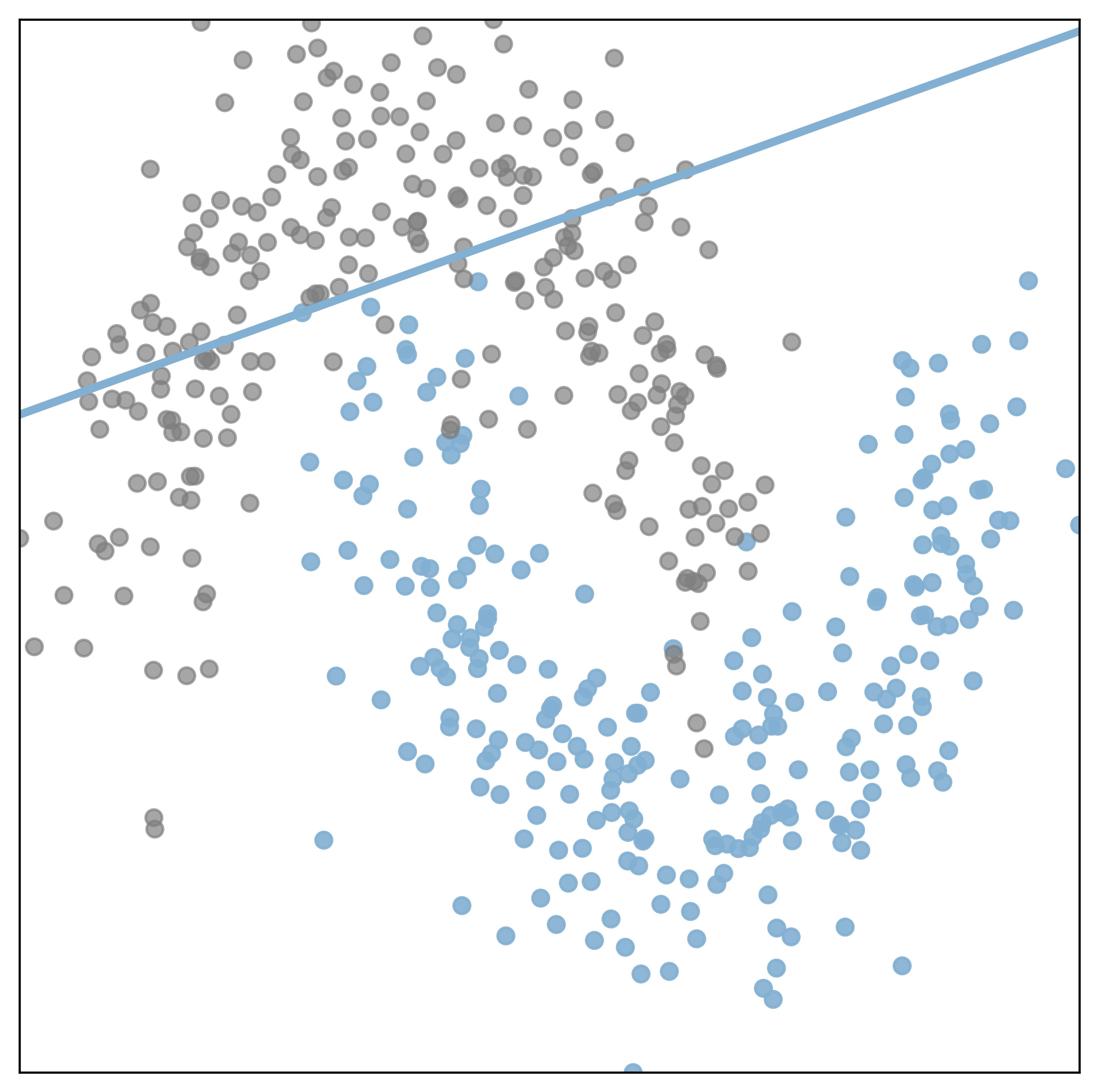}
      \caption{\(s(B, A)=0.452\)}
      \label{fig:sup_B}
    \end{subfigure}
  \end{tabular}
  \caption{Linear function for the linear separability measure of point sets $A$ and $B$.}
  \label{fig:fx_s}
\end{figure*}

\subsection{Linear Invariance of the Directional Linear Separability Measure}
\label{subsec:linear-invariance}

In this subsection, we establish the linear invariance of the directional linear separability measure \(s(A,B)\).
This property is essential for understanding the geometric nature of the proposed measure.

\begin{theorem}[Invariance under Full-Rank Linear Embeddings]
  \label{thm:linear_trans}
  Let \(A=\{a_i\}_{i=1}^I\subset\mathbb{R}^N\) and \(B=\{b_j\}_{j=1}^J\subset\mathbb{R}^N\) be two finite point sets with \(A\neq\emptyset\).
  Let \(V\in\mathbb{R}^{N\times H}\) with \(H\ge N\) and \(\operatorname{rank}(V)=N\).
  Define the linear map \(T:\mathbb{R}^N\to\mathbb{R}^H\) by \(T(x)=V^\top x\), and denote
  \[
    T(A)=\{V^\top a\mid a\in A\},\qquad
    T(B)=\{V^\top b\mid b\in B\}.
  \]
  Then the directional linear separability measure is invariant under \(T\), that is,
  \[
    s(A,B)=s\bigl(T(A),T(B)\bigr).
  \]
\end{theorem}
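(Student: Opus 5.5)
The plan is to prove the two inequalities \(s(T(A),T(B))\ge s(A,B)\) and \(s(T(A),T(B))\le s(A,B)\). In each direction, a feasible affine function in one space is transported to a feasible affine function in the other space with exactly the same sign pattern on the samples. Since \(|T(A)|\) might in principle differ from \(|A|\), we first check cardinalities. Because \(\operatorname{rank}(V)=N\) and \(V^\top\in\mathbb{R}^{H\times N}\) has rank \(N\), the map \(T\) is injective. Hence \(|T(A)|=|A|\), and the normalization in Definition~\ref{def:d_lsm} is unchanged. Injectivity also means distinct points of \(B\) stay distinct, so violation counts are counted on the same index set.

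For \(s(T(A),T(B))\ge s(A,B)\), take an optimal affine \(f(x)=w^\top x+c\) in \(\mathcal F_A\). We need \(g:\mathbb{R}^H\to\mathbb{R}\) affine with \(g(V^\top x)=f(x)\) for all \(x\). Since \(VV^\top\in\mathbb{R}^{N\times N}\) is invertible (because \(\operatorname{rank}V=N\)), set \(v=V^\top(VV^\top)^{-1}w\). Then \(v^\top V^\top x=w^\top (VV^\top)^{-1}VV^\top x=w^\top x\). Defining \(g(y)=v^\top y+c\) therefore gives \(g\circ T=f\), so \(g\in\mathcal F_{T(A)}\). Moreover \(\{y\in T(B)\mid g(y)\ge0\}=T(B_a(f))\), which has the same cardinality as \(B_a(f)\). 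Thus \(s_g(T(A),T(B))=s_f(A,B)\).

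For the reverse inequality, take any affine \(g(y)=v^\top y+c\in\mathcal F_{T(A)}\) and pull it back: \(f(x)=g(V^\top x)=(Vv)^\top x+c\), which is affine on \(\mathbb{R}^N\). Then \(f(a)=g(T(a))\ge0\) for all \(a\in A\), so \(f\in\mathcal F_A\). The violation sets again correspond bijectively under \(T\), so \(s_f(A,B)=s_g(T(A),T(B))\). Taking maxima on both sides, and noting that the maxima exist by the remark after Definition~\ref{def:d_lsm}, yields equality. Theorem~\ref{thm:optim_sup_h} is not needed, although one could phrase the argument through supporting hyperplanes if desired.

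The only genuinely nontrivial step is the lifting in the first direction, namely showing that every affine functional on \(\mathbb{R}^N\) factors through \(T\). This is exactly where the rank condition enters, through surjectivity of \(V\) as a map \(\mathbb{R}^H\to\mathbb{R}^N\), equivalently invertibility of \(VV^\top\). The pullback direction holds for arbitrary \(V\). The other point to state carefully is injectivity of \(T\), which preserves the cardinalities \(|A|\) and \(|B_a(f)|\), including the case where \(B\) is treated as a multiset of samples.
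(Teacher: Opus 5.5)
Your proof is correct and follows essentially the same route as the paper: injectivity of \(T\) preserves \(|A|\) and the violation counts, every affine function on \(\mathbb{R}^N\) lifts through \(T\) because \(V\) has full row rank, and every affine function on \(\mathbb{R}^H\) pulls back to one on \(\mathbb{R}^N\), so the two feasible families induce identical sign patterns on the samples. The only difference is cosmetic: you write the lift explicitly as \(v=V^\top(VV^\top)^{-1}w\), whereas the paper only asserts that some \(u\) with \(Vu=w\) exists.
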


\begin{proof}
  Since \(\operatorname{rank}(V)=N\), \(T(x)=V^\top x\) is injective, and hence \(|T(A)|=|A|\).
  For any affine function \(f(x)=w^\top x+c\) on \(\mathbb{R}^N\), the full row rank of \(V\) implies that there exists \(u\in\mathbb{R}^H\) such that \(Vu=w\).
  Thus, the affine function \(g(y)=u^\top y+c\) satisfies \(g(T(x))=f(x)\) for all \(x\).
  Conversely, any affine function \(g(y)=u^\top y+c\) on \(\mathbb{R}^H\) induces the affine function \(f(x)=(Vu)^\top x+c\) on \(\mathbb{R}^N\), again with \(g(T(x))=f(x)\).
  Therefore, the feasible affine functions before and after the transformation induce exactly the same sign patterns on \(A\cup B\), and hence the same number of violations from \(B\).
  Since the denominator is also unchanged, we have
  \[
    s(A,B)=s\bigl(T(A),T(B)\bigr).
  \]
\end{proof}

Since \(s(A,B)\) is determined by the minimum number of class-\(B\) samples that violate an affine boundary preserving all samples in \(A\), its value should remain stable under transformations that do not lose linear information.
We prove that any full-rank linear embedding \(T(x)=V^\top x\) preserves \(s(A,B)\).
Therefore, the measure is invariant under changes of linear coordinates and full-rank linear embeddings, suggesting that \(s(A,B)\) captures an intrinsic directional separability property of the two finite point sets.

\subsection{Relation to Linear Classification Accuracy}
\label{subsec:accuracy-relation}
The proposed LSM is related to, but different from, linear classification accuracy.
Accuracy allows errors on both classes, whereas LSM imposes the one-sided hard constraint that all target samples must remain on the target side.

\begin{definition}[Optimal Linear Classification Accuracy]
  \label{def:optim_accuracy}
  Let \(A,B\subset\mathbb{R}^d\) be finite point sets with \(|A|+|B|>0\), and let \(\mathcal{F}\) be the set of all affine functions on \(\mathbb{R}^d\).
  For \(f\in\mathcal{F}\), class \(a\) is predicted when \(f(x)\ge0\), and class \(b\) otherwise.
  Define
  \[
    A_a(f)=\{x\in A\mid f(x)\ge0\},\qquad
    B_b(f)=\{x\in B\mid f(x)<0\}.
  \]
  The accuracy induced by \(f\) is
  \[
    \operatorname{acc}_f(A,B)=\frac{|A_a(f)|+|B_b(f)|}{|A|+|B|},
  \]
  and the optimal linear classification accuracy is
  \[
    \operatorname{acc}(A,B)=\max_{f\in\mathcal{F}}\operatorname{acc}_f(A,B).
  \]
\end{definition}

\begin{theorem}[Accuracy vs. LSM]
  \label{thm:acc_lsm}
  Let \(A,B\subset\mathbb{R}^d\) be two finite nonempty point sets.
  Then
  \[
    \operatorname{acc}(A,B)\ge s(A,B).
  \]
  Moreover, the equality holds if and only if \(A\) and \(B\) are linearly separable.
\end{theorem}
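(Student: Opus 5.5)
The plan is to compare the two quantities through one and the same affine function: the function that attains the minimum in Definition~\ref{def:d_lsm}. By Definition~\ref{def:d_lsm} and the finiteness remark after it, there is \(f^\ast\in\mathcal{F}_A\) with \(k:=|B_a(f^\ast)|=\min_{f\in\mathcal{F}_A}|B_a(f)|\). Hence \(s(A,B)=1-k/|A|\), with \(0\le k\le |B|\). Theorem~\ref{thm:optim_sup_h} is not needed here; any minimizer will do.

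For the inequality, evaluate \(\operatorname{acc}_{f^\ast}\). Since \(f^\ast\ge0\) on \(A\), we have \(A_a(f^\ast)=A\). Also \(B_b(f^\ast)=B\setminus B_a(f^\ast)\), so \(|B_b(f^\ast)|=|B|-k\). Therefore
\[
  \operatorname{acc}(A,B)\ge \operatorname{acc}_{f^\ast}(A,B)=1-\frac{k}{|A|+|B|}\ge 1-\frac{k}{|A|}=s(A,B).
\]
The last step uses \(k\ge0\) and \(|A|+|B|\ge|A|>0\).

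For the equality case, I first fix the meaning of ``linearly separable''. I take it to mean that there is an affine \(f\) with \(f\ge0\) on \(A\) and \(f<0\) on \(B\). Because \(A\cup B\) is finite, this is equivalent to strict separation on both sides, and to weak separation with \(A\cap B=\emptyset\) after shifting the offset by a small amount. I will state this explicitly, since it is the only delicate point.

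\emph{If} the sets are separable, the separating \(f\) lies in \(\mathcal{F}_A\) with \(|B_a(f)|=0\). Then \(s(A,B)=1\). Combined with the inequality \(\operatorname{acc}(A,B)\ge s(A,B)\) and the trivial bound \(\operatorname{acc}\le1\), this forces \(\operatorname{acc}(A,B)=1=s(A,B)\).

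\emph{Only if}: suppose equality holds but \(A,B\) are not separable. Then every \(f\in\mathcal{F}_A\) has a violation, so \(k\ge1\). Since \(B\neq\emptyset\), we have \(|A|+|B|>|A|\), so the middle inequality in the display is strict:
\[
  1-\frac{k}{|A|+|B|}>1-\frac{k}{|A|}.
\]
This gives \(\operatorname{acc}(A,B)>s(A,B)\), a contradiction.

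The main (mild) obstacle is this use of the nonemptiness of \(B\) together with the definitional convention for separability. The arithmetic itself is routine.
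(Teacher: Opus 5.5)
Your proof is correct and is essentially the paper's argument: you use the same minimizer $f^\ast$, the same chain $\operatorname{acc}(A,B)\ge 1-k/(|A|+|B|)\ge 1-k/|A|=s(A,B)$, and the same use of $B\neq\emptyset$ to force $k=0$ in the converse (you argue it contrapositively, which is immaterial).

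One side remark should be dropped. Weak separation with $A\cap B=\emptyset$ is \emph{not} equivalent to your definition, even after an offset shift. For example, $A=\{(0,0),(2,0)\}$ and $B=\{(1,0)\}$ are weakly separated by $x_2=0$, yet $s(A,B)=1/2<2/3=\operatorname{acc}(A,B)$. Keep only ``$f\ge0$ on $A$, $f<0$ on $B$'' (equivalently, strict separation), which is the notion the paper's proof implicitly uses.
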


\begin{proof}
  Let
  \[
    m=\min_{f\in\mathcal{F}_A}|B_a(f)|,
  \]
  where \(\mathcal{F}_A=\{f\mid f \text{ is affine and } f(x)\ge0,\ \forall x\in A\}\).
  By the definition of LSM, we have
  \[
    s(A,B)=1-\frac{m}{|A|}.
  \]
  Let \(f^\star\in\mathcal{F}_A\) be an affine function attaining the above minimum.
  Since \(f^\star\) correctly classifies all samples in \(A\), its classification accuracy satisfies
  \[
    \operatorname{acc}_{f^\star}(A,B)
    =
    \frac{|A|+|B|-m}{|A|+|B|}
    =
    1-\frac{m}{|A|+|B|}.
  \]
  Since \(\operatorname{acc}(A,B)\) is the maximum accuracy over all affine functions,
  \[
    \operatorname{acc}(A,B)
    \ge
    \operatorname{acc}_{f^\star}(A,B)
    =
    1-\frac{m}{|A|+|B|}
    \ge
    1-\frac{m}{|A|}
    =
    s(A,B).
  \]
  Thus, \(\operatorname{acc}(A,B)\ge s(A,B)\).

  If \(A\) and \(B\) are linearly separable, then there exists an affine function that correctly classifies all samples in \(A\cup B\).
  Hence \(\operatorname{acc}(A,B)=1\).
  Meanwhile, for such a function, \(m=0\), so \(s(A,B)=1\).
  Therefore, \(\operatorname{acc}(A,B)=s(A,B)\).

  Conversely, suppose \(\operatorname{acc}(A,B)=s(A,B)\).
  From the inequality chain above, equality requires
  \[
    1-\frac{m}{|A|+|B|}
    =
    1-\frac{m}{|A|}.
  \]
  Since \(B\neq\emptyset\), this implies \(m=0\).
  Therefore, there exists \(f^\star\in\mathcal{F}_A\) such that \(B_a(f^\star)=\emptyset\), meaning \(f^\star(x)\ge0\) for all \(x\in A\) and \(f^\star(x)<0\) for all \(x\in B\).
  Hence \(A\) and \(B\) are linearly separable.
\end{proof}

\subsection{Optimization Objective for Estimating LSM}
\label{subsec:lsm_optimization}

According to Definition~\ref{def:d_lsm}, estimating \(s(A,B)\) requires finding a linear decision boundary that preserves all samples in \(A\) while maximizing the number of correctly classified samples in \(B\).
For multi-class classification, Definition~\ref{def:multi_class} extends this measure in a one-vs-rest manner, where each class \(A_i\) is treated as the target set and the union of all other classes forms the reference set \(B_i\).
Therefore, estimating the multi-class LSM reduces to estimating each class-wise measure \(s(A_i)=s(A_i,B_i)\).

In this subsection, we formulate the estimation of \(s(A_i,B_i)\) as an
optimization problem.
Without loss of generality, for each one-vs-rest problem, we assign label \(0\) to samples in \(A_i\) and label \(1\) to samples in
\(B_i\).
Here, the labels \(0\) and \(1\) are only class indices used in the cross-entropy loss.
Let \(f_{\theta_i}\) be a parameterized binary classifier for the \(i\)-th one-vs-rest problem, with two logits
\[
  f_{\theta_i}(x)
  =
  \bigl[
    f_{\theta_{i,a}}(x),
    f_{\theta_{i,b}}(x)
  \bigr],
\]
where \(f_{\theta_{i,a}}(x)\) and \(f_{\theta_{i,b}}(x)\) correspond to the target class \(A_i\) and the reference set \(B_i\),\ respectively.
The affine function in Definition~\ref{def:d_lsm} corresponds to the logit difference
\[
  g_{\theta_i}(x)
  =
  f_{\theta_{i,a}}(x)-f_{\theta_{i,b}}(x).
\]
Thus, the LSM decision boundary is given by
\[
  g_{\theta_i}(x)=0,
  \qquad\text{or equivalently}\qquad
  f_{\theta_{i,a}}(x)-f_{\theta_{i,b}}(x)=0.
\]
Samples with \(g_{\theta_i}(x)\ge0\) are assigned to the target side \(A_i\), while samples with \(g_{\theta_i}(x)<0\) are assigned to the reference side \(B_i\).

Ideally, estimating the class-wise LSM can be formulated as the following constrained optimization problem:
\[
  \begin{aligned}
    \min_{\theta_i} \quad
    & \mathcal{L}_{\mathrm{CE}}\bigl(f_{\theta_i}(B_i),1\bigr) \\
    \mathrm{s.t.} \quad
    & \operatorname{Acc}\bigl(f_{\theta_i}(A_i),0\bigr)=1 ,
  \end{aligned}
\]
where \(\mathcal{L}_{\mathrm{CE}}\) denotes the cross-entropy loss.
The constraint enforces that all samples in \(A_i\) are classified as label \(0\), while the objective encourages samples in \(B_i\) to be classified as label \(1\).

Directly solving this constrained optimization problem can be computationally expensive for large-scale high-dimensional data.
Therefore, we relax it into an unconstrained penalty-based objective:
\[
  \mathcal{L}(\theta_i)
  =
  \beta_i\,\mathcal{L}_{\mathrm{CE}}\bigl(f_{\theta_i}(B_i),1\bigr)
  +
  \alpha_i\,\mathcal{L}_{\mathrm{CE}}\bigl(f_{\theta_i}(A_i),0\bigr),
\]
where the second term penalizes violations of the constraint on \(A_i\), and \(\alpha,\beta\) control the relative importance of preserving \(A_i\) and improving the classification accuracy on \(B_i\), respectively.

To prioritize the constraint on \(A_i\), we adaptively update \(\alpha\) and \(\beta\) during optimization.
If
\[
  \operatorname{Acc}\bigl(f_{\theta_i}(A_i),0\bigr)<1,
\]
then the constraint is violated, and we set
\[
  \alpha_i \leftarrow \gamma\alpha_i,
  \qquad
  \beta_i \leftarrow 0,
\]
where \(\gamma>1\) is a scaling factor, e.g., \(\gamma=1.01\).
Otherwise, the constraint is satisfied, and we keep \(\alpha\) unchanged while setting
\[
  \beta_i \leftarrow 1.
\]
In this way, the optimization focuses on satisfying the constraint on \(A_i\) whenever it is violated, and improves the classification accuracy on \(B_i\) only after all samples in \(A_i\) are correctly classified.

For efficiency, we further simplify the output structure of the classifier.
Since the binary decision depends only on the logit difference \(f_{\theta_{i,a}}(x)-f_{\theta_{i,b}}(x)\), we fix
\[
  f_{\theta_{i,b}}(x)=0
\]
and optimize only \(f_{\theta_{i,a}}(x)\).
Under this parameterization,
\[
  g_{\theta_i}(x)=f_{\theta_{i,a}}(x),
\]
and the decision boundary becomes
\[
  f_{\theta_{i,a}}(x)=0.
\]
Therefore, the simplified model preserves the same decision rule as the original two-logit classifier.
In a \(c\)-class one-vs-rest setting, this reduces the number of output neurons from \(2c\) to \(c\), thereby improving the efficiency
of optimization.

\begin{figure*}[ht]
  \label{fig:reduce_neurons}
  \centering
  \setlength{\tabcolsep}{6pt}
  \renewcommand{\arraystretch}{0}
  \begin{subfigure}[t]{0.81\textwidth}
    \centering
    \includegraphics[width=\textwidth]{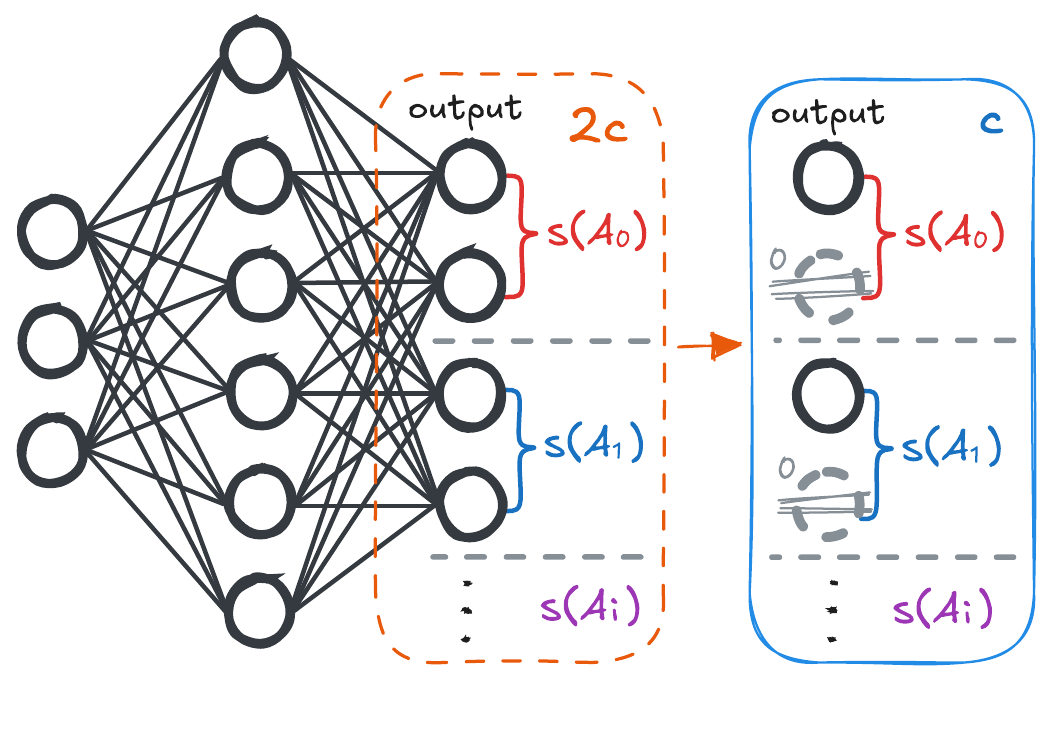}
  \end{subfigure}
  \caption{Illustration of the output-neuron reduction for estimating class-wise LSM.}
\end{figure*}

After training, we estimate \(s(A_i)\) using the discrete decision rule induced by \(g_{\theta_i}(x)\), rather than using the cross-entropy value itself.
Specifically, the learned boundary is evaluated by checking whether all samples in \(A_i\) are assigned to the target side and by counting how many samples in \(B_i\) are assigned to the reference side.

\subsection{GGLU-Induced Changes in Directional Linear Separability}
\label{subsec:gglu_lsm}

In this subsection, we study how the coordinatewise nonlinear transformation induced by a GGLU affects the directional linear separability measure introduced in Definition~\ref{def:d_lsm}.
Let \(A,B\subset\mathbb R^d\) be two finite point sets representing samples from classes \(a\) and \(b\), respectively.
For a fixed affine function \(f\in\mathcal F_A\), the set \(A\) is classified as class \(a\), while the samples in \(B\) are divided according to whether they lie on the same side of the affine boundary as \(A\).
To understand how GGLU may preserve or improve separability, it is useful to further distinguish the class-\(b\) samples that are already separated from \(A\), the misclassified samples inside \(\operatorname{conv}(A)\), and the misclassified samples outside \(\operatorname{conv}(A)\).
The following definition formalizes this decomposition.

\begin{definition}[Subsets Induced by an Affine Boundary]
  \label{def:subsets_B}
  Let \(A,B\subset\mathbb{R}^d\) be finite point sets, and let \(f:\mathbb{R}^d\to\mathbb{R}\) be affine with \(f(x)\ge0\) for all \(x\in A\).
  Define
  \[
    B_a(f)=\{x\in B\mid f(x)\ge0\},
    \qquad
    B_b(f)=\{x\in B\mid f(x)<0\}.
  \]
  Furthermore, let
  \[
    B_{in}(f)
    =
    B_a(f)\cap\operatorname{conv}(A),
    \qquad
    B_{out}(f)
    =
    B_a(f)\setminus B_{in}(f).
  \]
  Here, \(B_{in}(f)\) denotes the misclassified samples from \(B\) lying inside \(\operatorname{conv}(A)\), while \(B_{out}(f)\) denotes the misclassified samples outside this convex hull.
\end{definition}

\begin{figure*}[ht]
  \label{fig:B_convA}
  \centering
  \setlength{\tabcolsep}{6pt}
  \renewcommand{\arraystretch}{0}
  \begin{subfigure}[t]{0.5\textwidth}
    \centering
    \includegraphics[width=\textwidth]{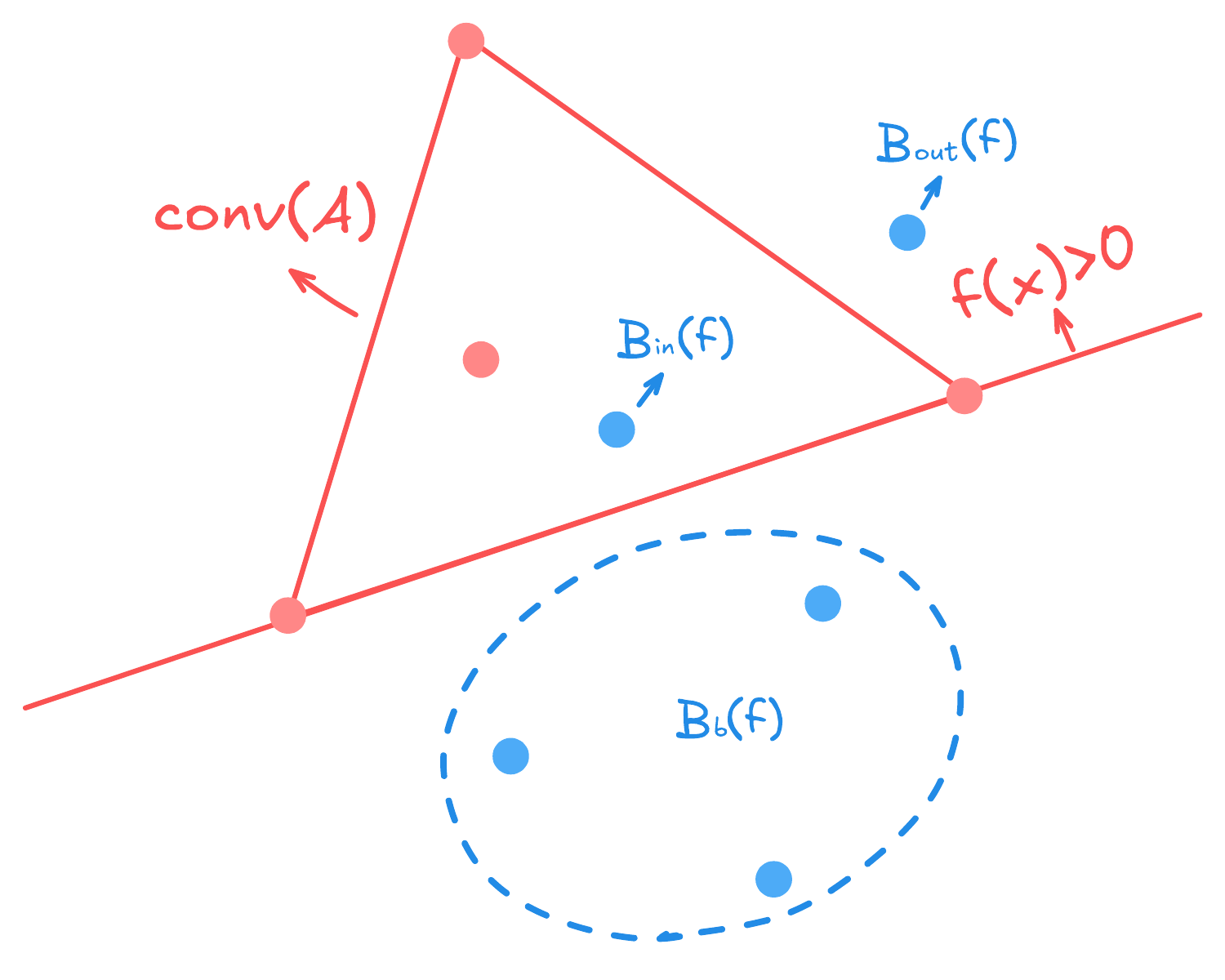}
  \end{subfigure}
  \caption{Geometric illustration of the subsets of \(B\) induced by an affine boundary \(f(x)\). The red polygon denotes \(\operatorname{conv}(A)\). Samples in \(B_b(f)\) are correctly placed on the class-\(b\) side, while samples in \(B_{in}(f)\) and \(B_{out}(f)\) are misclassified as class \(a\), lying inside and outside \(\operatorname{conv}(A)\), respectively.}
\end{figure*}

The above decomposition is defined in the original feature space and depends on the affine boundary \(f\).
In what follows, we do not introduce any additional linear layer, and only study the effect of applying a coordinatewise activation to the current point sets \(A\) and \(B\).
This is different from the full-rank linear embedding case in Theorem~\ref{thm:linear_trans}, where the directional LSM is invariant.
A Generalized gated linear unit, being nonlinear and coordinatewise, may change the convex geometry of the data and hence may affect the separability of the subsets in Definition~\ref{def:subsets_B}.
We next recall the definition of the generalized gated linear unit.

\begin{definition}[Generalized gated linear unit]
  \label{def:gglu}
  Let \(\sigma:\mathbb{R}\to[0,1]\) be a nondecreasing gating function satisfying \(\lim_{x\to-\infty}\sigma(x)=0\) and \(\lim_{x\to+\infty}\sigma(x)=1\). The generalized gated linear unit induced by \(\sigma\) is defined as
  \begin{equation}
    \label{eq:gglu-scalar}
    \rho_\sigma(x)=x\sigma(x).
  \end{equation}
  For \(z\in\mathbb{R}^h\), $\rho_\sigma$ is applied coordinatewise:
  \begin{equation}
    \label{eq:gglu-vector}
    \rho_\sigma(z)
    =
    \bigl(
      z_1\sigma(z_1),\dots,z_h\sigma(z_h)
    \bigr),
    \qquad
    z\in\mathbb{R}^h.
  \end{equation}
\end{definition}

Common activations are instances or smooth variants of this form.
If $\sigma(x)=H(x)$ is the Heaviside step function, then $\rho_\sigma(x)=\max\{x,0\}$ is the ReLU activation.
If $\sigma(x)=\Phi(x)$ is the standard normal cumulative distribution function, then $\rho_\sigma$ is the GELU activation.
If $\sigma(x)=(1+\exp(-x))^{-1}$, then $\rho_\sigma$ is the SiLU activation, also known as Swish with parameter $\beta=1$.

To facilitate the geometric analysis of the samples in \(B\), we introduce the following subsets according to their positions relative to the affine boundary and \(\operatorname{conv}(A)\).

\begin{lemma}[Basic geometry of GGLU]
    Let
    \[
    \rho_\sigma(x)=x\sigma(x),
    \]
    where \(\sigma:\mathbb R\to[0,1]\) is nondecreasing. Then:
    \begin{enumerate}
        \item If \(x\le 0\), then \(x\le \rho_\sigma(x)\le 0.\)
    
        \item If \(x\ge 0\), then \(0\le \rho_\sigma(x)\le x.\)
    
        \item The function \(\rho_\sigma\) is nondecreasing on \([0,+\infty)\).
    
        \item If \(\rho_\sigma\) admits a global minimizer \(\tau\), then every global minimizer satisfies \(\tau\le 0.\)
    \end{enumerate}
\end{lemma}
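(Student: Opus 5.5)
The argument needs only two facts: \(\sigma\) takes values in \([0,1]\), and \(\sigma\) is nondecreasing. I will prove the four items in order, since item 4 follows from items 1 and 2.

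For items 1 and 2, write \(\rho_\sigma(x)=x\sigma(x)\) and multiply \(0\le\sigma(x)\le1\) by \(x\). If \(x\le0\), multiplying by a nonpositive number reverses the inequalities, giving \(x\cdot1\le x\sigma(x)\le x\cdot 0\), that is, \(x\le\rho_\sigma(x)\le0\). If \(x\ge0\), the inequalities are preserved, giving \(0\le\rho_\sigma(x)\le x\).

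For item 3, take \(0\le x\le y\). Monotonicity of \(\sigma\) gives \(\sigma(x)\le\sigma(y)\). Since \(\sigma(x)\ge0\), we get \(x\sigma(x)\le y\sigma(x)\). Since \(y\ge0\), we get \(y\sigma(x)\le y\sigma(y)\). Chaining these gives \(\rho_\sigma(x)\le\rho_\sigma(y)\). The only point needing care is to split the comparison through the intermediate quantity \(y\sigma(x)\), so that each step multiplies by a nonnegative factor. No continuity of \(\sigma\) is required.

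For item 4, suppose \(\tau\) is a global minimizer and, for contradiction, \(\tau>0\). By item 2, \(\rho_\sigma(\tau)\ge0=\rho_\sigma(0)\). Minimality gives \(\rho_\sigma(\tau)\le\rho_\sigma(0)=0\), so \(\rho_\sigma(\tau)=0\). Then \(0\) is also a global minimizer, but this alone does not contradict \(\tau>0\). I resolve this with a case split on the global minimum value \(\mu=\rho_\sigma(\tau)\).

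\emph{Case \(\mu<0\).} Item 2 gives \(\rho_\sigma\ge0\) on \([0,\infty)\), so no minimizer can be positive.

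\emph{Case \(\mu=0\).} The statement as written can then fail. The ReLU, obtained with \(\sigma=H\), has every \(\tau\ge0\) as a minimizer, which contradicts the claim. So I would prove item 4 under the reading that holds in general: some global minimizer satisfies \(\tau\le0\), namely \(\tau=0\) whenever the minimum value is \(0\). Equivalently, every global minimizer is nonpositive whenever \(\min\rho_\sigma<0\), which is the negative-well situation relevant to SiLU and GELU. If the intended statement is "every", I would add the hypothesis \(\sigma>0\) on \((0,\infty)\). Under that hypothesis, \(\rho_\sigma(x)>0\) for \(x>0\), so \(\rho_\sigma(\tau)=0\) forces \(\tau\le0\).

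The main obstacle is this degenerate zero-minimum case in item 4. Everything else is routine sign bookkeeping.
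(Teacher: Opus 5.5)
Your proof is correct and follows the paper's approach. Items 1 and 2 use the same multiplication of $0\le\sigma(x)\le1$ by $x$. Item 3 is the same chaining argument; the paper passes through $s\sigma(x)$ with $s\le x$ instead of your $y\sigma(x)$, which makes no difference. For item 4 the paper's proof also quietly adds the hypothesis $\sigma(x)>0$ for all $x>0$, which is absent from the lemma's statement. So you are right that item 4 needs this hypothesis as stated, and your case split under it is the paper's argument.

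Your witness for the failure of item 4 is wrong, however. With $\sigma=H$ you get $\rho_\sigma(x)=\max\{x,0\}$, which is strictly positive on $(0,\infty)$. Its global minimizers are therefore exactly $(-\infty,0]$, so ReLU satisfies item 4. It even satisfies your added hypothesis, since $H>0$ on $(0,\infty)$.

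A genuine counterexample needs $\sigma$ to vanish on some interval $(0,c)$. Take $\sigma(x)=1$ for $x\ge1$ and $\sigma(x)=0$ for $x<1$; this also meets the limit conditions of Definition~\ref{def:gglu}. Then $\rho_\sigma\equiv0$ on $(-\infty,1)$, so $\tau=1/2$ is a positive global minimizer. Replace the ReLU remark with an example of this kind and the proposal is sound.
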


\begin{proof}
    Since \(0\le \sigma(x)\le 1\), if \(x\le0\), multiplying by \(x\) gives
    \(x\le x\sigma(x)\le0\), and hence \(x\le \rho_\sigma(x)\le0\). 
    If \(x\ge0\), then \(0\le x\sigma(x)\le x\), so \(0\le \rho_\sigma(x)\le x\).
    
    Next, for \(0\le s\le x\), the monotonicity of \(\sigma\) implies \(\sigma(s)\le\sigma(x)\). 
    Thus
    \[
    \rho_\sigma(s)=s\sigma(s)\le s\sigma(x)\le x\sigma(x)=\rho_\sigma(x),
    \]
    where the last inequality follows from \(s\le x\) and \(\sigma(x)\ge0\).
    Therefore, \(\rho_\sigma\) is nondecreasing on \([0,\infty)\).
    
    Finally, \(\rho_\sigma(0)=0\) and \(\rho_\sigma(x)\ge0\) for all \(x\ge0\).
    Moreover, if \(\sigma(x)>0\) for all \(x>0\), then \(\rho_\sigma(x)>0\) for all \(x>0\). 
    Hence no global minimizer can lie in \((0,\infty)\), and any global minimizer \(\tau\) must satisfy \(\tau\le0\).
\end{proof}

The basic geometry of GGLU is sufficient to show that each coordinate is moved toward the origin.
To analyze how this movement affects linear separability, we further need a uniform description of the activation around its minimum and on the positive half-axis.
The next assumption provides such a structure.
It separates the real line into the positive region, the negative-well region, and the far-negative region, which will later be used to construct region-stratified perturbation boxes.
This assumption is mild for the activations considered here and covers ReLU, SiLU, and GeLU.

\begin{assumption}[Admissible GGLU structure]
  \label{assu:gglu}
  Let \(\rho_\sigma(x)=x\sigma(x)\) be a generalized gated linear unit as in Definition~\ref{def:gglu}. 
  We assume that \(\rho_\sigma\) admits a global minimizer \(\tau\le0\).
Denote \(\mu=\rho_\sigma(\tau)\le0\). 
This formulation also covers the degenerate ReLU case: when \(\sigma(x)=H(x)\), one may take \(\tau=0\) and \(\mu=0\), so the negative-well interval introduced below becomes empty.
Moreover, assume that \(\rho_\sigma\) is nondecreasing on \([\tau,+\infty)\), and that the positive-side shrinkage is uniformly bounded, namely
  \[
    \kappa_+ := \sup_{x\ge0} x(1-\sigma(x)) <+\infty.
  \]
\end{assumption}

Under Assumption~\ref{assu:gglu}, the displacement induced by the GGLU can be described according to the coordinate region in which each input lies.
More precisely, for a scalar input \(t\), the perturbation
\[
  \delta_\sigma(t)=\rho_\sigma(t)-t
\]
has different signs and bounds on the three regions
\(I_+=[0,+\infty)\), \(I_0=[\tau,0)\), and \(I_-=(-\infty,\tau)\), where \(I_0\) may be empty in the degenerate ReLU case \(\tau=0\).
This region-dependent behavior is the key geometric feature of GGLU.
Instead of bounding the whole transformed set by a single global perturbation box, we stratify a finite point set according to the coordinate regions of its points.
For each stratum, we construct a coordinatewise perturbation box that captures the worst admissible displacement induced by the GGLU.
This leads to the following region-stratified perturbation box lemma.

\begin{lemma}[Region-stratified perturbation boxes for GGLU]
  \label{lem:gglu_region_box}
  Let \(\rho_\sigma\) satisfy Assumption~\ref{assu:gglu}, and set
  \(\delta_\sigma(t)=\rho_\sigma(t)-t\). Define
  \[
    I_+=[0,+\infty),\qquad I_0=[\tau,0),\qquad I_-=(-\infty,\tau).
  \]
  For a finite set \(S\subset\mathbb R^d\) and a pattern
  \(\chi=(\chi_1,\ldots,\chi_d)\in\{+,0,-\}^d\), define
  \[
    S_\chi=\{x\in S\mid x_j\in I_{\chi_j},\ j=1,\ldots,d\}.
  \]
  For every nonempty \(S_\chi\), define the coordinatewise displacement interval
  \[
    J_{\chi j}(S)=
    \begin{cases}
      [-\kappa_+,0], & \chi_j=+,\\
      [0,\mu-\tau], & \chi_j=0,\\
      [\mu-\tau,\kappa_{\chi,j}^-(S)], & \chi_j=-,
    \end{cases}
  \]
  where, for \(\chi_j=-\),
  \[
    \kappa_{\chi,j}^-(S)
    =
    \max_{x\in S_\chi}\delta_\sigma(x_j).
  \]
  Then for every \(x\in S_\chi\), one has
  \[
    \delta_\sigma(x_j)\in J_{\chi j}(S),
    \qquad j=1,\ldots,d.
  \]
  Equivalently, with \(D_\chi(S)=\prod_{j=1}^dJ_{\chi j}(S)\), we have
  \[
    \rho_\sigma(x)-x\in D_\chi(S),
    \qquad x\in S_\chi,
  \]
  and hence
  \[
    \rho_\sigma(S)\subseteq
    \bigcup_{\chi:S_\chi\neq\emptyset}
    \bigl(S_\chi+D_\chi(S)\bigr).
  \]
  Moreover, the activated coordinate itself satisfies the region-wise range constraint:
  if \(\chi_j=+\), then \(0\le \rho_\sigma(x_j)\le x_j\); if
  \(\chi_j=0\) or \(\chi_j=-\), then \(\mu\le \rho_\sigma(x_j)\le0\).
  Finally, for any \(u\in\mathbb R^d\), if
  \[
    \Gamma_\chi(u;S)=\sum_{j=1}^d u_jJ_{\chi j}(S),
  \]
  then
  \[
    u^\top(\rho_\sigma(x)-x)\in\Gamma_\chi(u;S),
    \qquad x\in S_\chi.
  \]
\end{lemma}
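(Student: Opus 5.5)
The plan is to reduce every claim to a scalar statement about \(\delta_\sigma(t)=\rho_\sigma(t)-t\) on each of the three regions \(I_+\), \(I_0\), \(I_-\). The vector statements then follow coordinatewise, because \(\rho_\sigma\) acts coordinatewise. Concretely, for \(x\in S_\chi\) each coordinate \(x_j\) lies in \(I_{\chi_j}\). Once the scalar bounds are proved, \(\rho_\sigma(x)-x=(\delta_\sigma(x_1),\ldots,\delta_\sigma(x_d))\in D_\chi(S)\). The inclusion \(\rho_\sigma(S)\subseteq\bigcup_\chi(S_\chi+D_\chi(S))\) then holds because the nonempty \(S_\chi\) partition \(S\). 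The projection statement follows since \(u_j\delta_\sigma(x_j)\in u_jJ_{\chi j}(S)\) for each \(j\), and summing gives membership in the Minkowski sum \(\Gamma_\chi(u;S)\).

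\emph{Region \(I_+\).} For \(t\ge0\), write \(\delta_\sigma(t)=-t(1-\sigma(t))\). Since \(\sigma\le1\), this is \(\le0\). By the definition of \(\kappa_+\) in Assumption~\ref{assu:gglu}, it is \(\ge-\kappa_+\). The range constraint \(0\le\rho_\sigma(x_j)\le x_j\) is item 2 of the basic geometry lemma.

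\emph{Region \(I_-\).} For \(t<\tau\), \(\tau\) is a global minimizer, so \(\rho_\sigma(t)\ge\mu\). Hence \(\delta_\sigma(t)\ge\mu-t>\mu-\tau\). The upper endpoint \(\kappa^-_{\chi,j}(S)\) is a maximum over the finite set \(S_\chi\), so the upper bound holds by definition. For both \(I_0\) and \(I_-\) the range constraint \(\mu\le\rho_\sigma(x_j)\le0\) follows from global minimality of \(\tau\) together with item 1 of the basic lemma, since \(x_j\le0\).

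\emph{Region \(I_0\).} For \(t\in[\tau,0)\), the lower bound \(\delta_\sigma(t)\ge0\) is again item 1 of the basic lemma (\(t\le\rho_\sigma(t)\)). The upper bound \(\delta_\sigma(t)\le\mu-\tau\) is the step I expect to be the main obstacle. It is equivalent to \(\rho_\sigma(t)-\rho_\sigma(\tau)\le t-\tau\), a one-sided 1-Lipschitz property, and it is not implied by monotonicity of \(\rho_\sigma\) alone. I would use the gating structure directly, via
\[
\rho_\sigma(t)-\rho_\sigma(\tau)=(t-\tau)\sigma(t)+\tau\bigl(\sigma(t)-\sigma(\tau)\bigr).
\]
Since \(\tau\le0\) and \(\sigma\) is nondecreasing with \(t\ge\tau\), the second term is \(\le0\). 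Since \(t-\tau\ge0\) and \(\sigma(t)\le1\), the first term is \(\le t-\tau\). In the degenerate ReLU case \(\tau=0\) the region \(I_0\) is empty and there is nothing to check. Collecting the three cases yields \(\delta_\sigma(x_j)\in J_{\chi j}(S)\), and hence all remaining claims follow as described in the first paragraph.
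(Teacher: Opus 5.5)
Your proposal is correct and follows the paper's proof. It uses the same three-region scalar case analysis for \(\delta_\sigma\) and the range constraint, assembles the bounds coordinatewise into \(D_\chi(S)\), takes the union over the partitioning strata, and obtains the projection bound as a Minkowski sum. The one place you go beyond the paper is the \(I_0\) upper bound \(\delta_\sigma(t)\le\mu-\tau\): the paper only notes \(\rho_\sigma(x_j)\in[\mu,0]\), which by itself gives just \(\delta_\sigma(x_j)\le-\tau\), whereas your identity \(\rho_\sigma(t)-\rho_\sigma(\tau)=(t-\tau)\sigma(t)+\tau\bigl(\sigma(t)-\sigma(\tau)\bigr)\) actually proves the stated bound.
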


\begin{proof}
  Let \(x\in S_\chi\). If \(\chi_j=+\), then \(x_j\ge0\), and the GGLU geometry gives
  \(0\le\rho_\sigma(x_j)\le x_j\). Hence
  \[
    \delta_\sigma(x_j)=\rho_\sigma(x_j)-x_j\in[-\kappa_+,0].
  \]
  If \(\chi_j=0\), then \(x_j\in[\tau,0)\). By Assumption~\ref{assu:gglu},
  \(\rho_\sigma(x_j)\in[\mu,0]\), and the corresponding displacement satisfies
  \[
    \delta_\sigma(x_j)\in[0,\mu-\tau].
  \]
  If \(\chi_j=-\), then \(x_j<\tau\). Since \(\mu\) is the global minimum of
  \(\rho_\sigma\), we have \(\rho_\sigma(x_j)\ge\mu\), and since \(x_j<0\), the
  basic GGLU geometry gives \(\rho_\sigma(x_j)\le0\). Thus
  \[
    \mu\le\rho_\sigma(x_j)\le0.
  \]
  Moreover,
  \[
    \delta_\sigma(x_j)=\rho_\sigma(x_j)-x_j\ge \mu-x_j>\mu-\tau,
  \]
  while by the definition of \(\kappa_{\chi,j}^-(S)\),
  \[
    \delta_\sigma(x_j)\le\kappa_{\chi,j}^-(S).
  \]
  Hence
  \[
    \delta_\sigma(x_j)\in[\mu-\tau,\kappa_{\chi,j}^-(S)].
  \]
  This proves \(\delta_\sigma(x_j)\in J_{\chi j}(S)\) for all coordinates.
  Therefore
  \[
    \rho_\sigma(x)-x
    =
    (\delta_\sigma(x_1),\ldots,\delta_\sigma(x_d))
    \in D_\chi(S).
  \]
  Taking the union over all nonempty strata gives
  \[
    \rho_\sigma(S)\subseteq
    \bigcup_{\chi:S_\chi\neq\emptyset}
    \bigl(S_\chi+D_\chi(S)\bigr).
  \]
  Finally,
  \[
    u^\top(\rho_\sigma(x)-x)
    =
    \sum_{j=1}^d u_j\delta_\sigma(x_j)
    \in
    \sum_{j=1}^d u_jJ_{\chi j}(S)
    =
    \Gamma_\chi(u;S).
  \]
  The proof is complete.
\end{proof}

The intervals \(J_{\chi j}(S)\) describe displacement ranges for \(\delta_\sigma(x_j)=\rho_\sigma(x_j)-x_j\), and they are used to control the worst-case projection changes in the robust margin arguments.
The additional range constraint describes the activated value \(\rho_\sigma(x_j)\) itself.
In particular, when \(x_j\in I_-\), the displacement \(\delta_\sigma(x_j)\) may be large if \(x_j\) is far negative, because the GGLU pulls the coordinate back toward the origin.
However, the activated value remains in \([\mu,0]\), and therefore it never becomes large positive.
Thus the displacement box and the activated-value range capture two different aspects of the same GGLU geometry.

We now use the region-stratified perturbation boxes to analyze how GGLU affects the subsets in Definition~\ref{def:subsets_B}.
The first and most basic case concerns the samples that are already separated from \(A\) by the original affine boundary. \(B_b(f)\) consists of the class-\(b\) samples satisfying \(f(x)<0\), while all points in \(A\) satisfy \(f(x)\ge0\).
Thus \(A\) and \(B_b(f)\) are linearly separable before applying the GGLU map.
The question is whether this separation can be preserved after the coordinatewise nonlinear transformation.
Since the GGLU may move both \(A\) and \(B_b(f)\), we impose a robust margin condition along a fixed direction \(u\).
If the worst-case region-stratified perturbation envelope of \(A\) remains strictly separated from that of \(B_b(f)\), then the transformed sets \(\rho_\sigma(A)\) and \(\rho_\sigma(B_b(f))\) remain linearly separable.
This gives the first scenario.

\begin{proposition}[Region-stratified robust preservation of linear separability]
  \label{prop:gglu_preserve_region_robust}
  Let \(A,B\subset\mathbb R^d\) be finite point sets such that \(A\) and \(B\)
  are linearly separable. Suppose that \(\rho_\sigma\) satisfies
  Assumption~\ref{assu:gglu}. For a finite set \(S\), let \(S_\chi\) and
  \(J_{\chi j}(S)\) be defined as in Lemma~\ref{lem:gglu_region_box}. For
  \(u\in\mathbb R^d\), define
  \[
    \Gamma_\chi(u;S)=\sum_{j=1}^d u_jJ_{\chi j}(S),
    \qquad
    \Gamma_\chi^-(u;S)=\inf\Gamma_\chi(u;S),\quad
    \Gamma_\chi^+(u;S)=\sup\Gamma_\chi(u;S).
  \]
  Assume that there exists \(u\in\mathbb R^d\setminus\{0\}\) such that
  \[
    \max_{\chi:A_\chi\neq\emptyset}
    \left\{
      \max_{a\in A_\chi}u^\top a+\Gamma_\chi^+(u;A)
    \right\}
    <
    \min_{\chi:B_\chi\neq\emptyset}
    \left\{
      \min_{b\in B_\chi}u^\top b+\Gamma_\chi^-(u;B)
    \right\}.
  \]
  Then \(\rho_\sigma(A)\) and \(\rho_\sigma(B)\) are linearly separable.
\end{proposition}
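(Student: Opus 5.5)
The plan is to exhibit an explicit separating hyperplane in the activated space with normal \(u\) itself. Write \(M_A\) for the left-hand side and \(m_B\) for the right-hand side of the assumed strict inequality. Pick any threshold \(c\) with \(M_A<c<m_B\), for instance the midpoint. The separating affine function will be \(g(y)=c-u^\top y\). We then only need \(u^\top\rho_\sigma(a)\le M_A\) for every \(a\in A\) and \(u^\top\rho_\sigma(b)\ge m_B\) for every \(b\in B\). This yields \(g\ge0\) on \(\rho_\sigma(A)\) and \(g<0\) on \(\rho_\sigma(B)\), so the two activated sets are (strictly) linearly separable.

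Both bounds come from the final clause of Lemma~\ref{lem:gglu_region_box}. Fix \(a\in A\). It belongs to exactly one stratum \(A_\chi\), because the intervals \(I_+,I_0,I_-\) partition \(\mathbb R\), so every coordinate pattern is well defined. Decompose \(u^\top\rho_\sigma(a)=u^\top a+u^\top(\rho_\sigma(a)-a)\). The lemma, applied with \(S=A\), gives \(u^\top(\rho_\sigma(a)-a)\in\Gamma_\chi(u;A)\), hence \(u^\top(\rho_\sigma(a)-a)\le\Gamma_\chi^+(u;A)\). Combined with \(u^\top a\le\max_{a'\in A_\chi}u^\top a'\), this gives \(u^\top\rho_\sigma(a)\le M_A\). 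The lower bound for \(b\in B_\chi\) is symmetric: apply the lemma with \(S=B\) and use \(\Gamma_\chi^-(u;B)\) together with \(\min_{b'\in B_\chi}u^\top b'\).

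The main technical check is that \(\Gamma_\chi^\pm\) are finite, so that the hypothesis is meaningful and the bounds are genuine real numbers. Each \(J_{\chi j}(S)\) is a compact interval. For \(\chi_j=+\), this uses \(\kappa_+<\infty\) from Assumption~\ref{assu:gglu}. For \(\chi_j=0\), it uses \(\mu-\tau\), which is finite. For \(\chi_j=-\), it uses the maximum over the finite stratum \(S_\chi\), and that maximum is at least \(\mu-\tau\) by the computation inside the lemma, so the interval is nonempty. A Minkowski sum of scaled compact intervals is a compact interval, and its endpoints are obtained coordinatewise: \(u_j\) times the right endpoint of \(J_{\chi j}\) when \(u_j\ge0\), and \(u_j\) times the left endpoint otherwise. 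Hence \(\Gamma_\chi^\pm\) are attained and finite.

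Finiteness of \(A\) and \(B\) ensures that only finitely many strata are nonempty, so the outer max and min are attained as well. The degenerate ReLU case \(\tau=0\) is harmless, since then \(I_0\) is empty and no pattern uses it. The hypothesis that \(A\) and \(B\) are separable beforehand is not actually needed in this argument; the robust margin condition alone implies the conclusion.
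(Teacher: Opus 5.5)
Your proposal is correct and matches the paper's proof: fix a threshold strictly between the two sides of the assumed inequality, then use the last clause of Lemma~\ref{lem:gglu_region_box} to bound \(u^\top\rho_\sigma(a)\) from above and \(u^\top\rho_\sigma(b)\) from below, so the hyperplane with normal \(u\) strictly separates \(\rho_\sigma(A)\) and \(\rho_\sigma(B)\). Your two additions are accurate points the paper leaves implicit: the check that \(\Gamma_\chi^\pm\) are finite, and the observation that the hypothesis that \(A\) and \(B\) are separable beforehand is never used.
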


\begin{proof}
  Define
  \[
    U_u(A)=
    \max_{\chi:A_\chi\neq\emptyset}
    \left\{
      \max_{a\in A_\chi}u^\top a+\Gamma_\chi^+(u;A)
    \right\},
  \]
  and
  \[
    L_u(B)=
    \min_{\chi:B_\chi\neq\emptyset}
    \left\{
      \min_{b\in B_\chi}u^\top b+\Gamma_\chi^-(u;B)
    \right\}.
  \]
  By assumption, \(U_u(A)<L_u(B)\). Choose
  \(\theta\in(U_u(A),L_u(B))\).

  For any \(a\in A_\chi\), Lemma~\ref{lem:gglu_region_box} gives
  \[
    u^\top(\rho_\sigma(a)-a)\in\Gamma_\chi(u;A),
  \]
  and hence
  \[
    u^\top\rho_\sigma(a)
    \le
    u^\top a+\Gamma_\chi^+(u;A)
    \le
    U_u(A)<\theta.
  \]
  Similarly, for any \(b\in B_\chi\),
  \[
    u^\top(\rho_\sigma(b)-b)\in\Gamma_\chi(u;B),
  \]
  and therefore
  \[
    u^\top\rho_\sigma(b)
    \ge
    u^\top b+\Gamma_\chi^-(u;B)
    \ge
    L_u(B)>\theta.
  \]
  Thus the hyperplane
  \[
    \{y\in\mathbb R^d\mid u^\top y=\theta\}
  \]
  strictly separates \(\rho_\sigma(A)\) and \(\rho_\sigma(B)\).
\end{proof}

The inequality \(U_u(A)<L_u(B)\) is a region-stratified robust margin condition.
By Lemma~\ref{lem:gglu_region_box}, \(\rho_\sigma(S)\) is contained in the perturbation envelope
\[
  \mathcal E(S)=\bigcup_{\chi:S_\chi\neq\emptyset}(S_\chi+D_\chi(S)).
\]
Thus \(U_u(A)\) is the worst-case upper projection of \(\mathcal E(A)\) along \(u\), while \(L_u(B)\) is the worst-case lower projection of \(\mathcal E(B)\) along \(u\).
The condition \(U_u(A)<L_u(B)\) means that these two envelopes remain strictly separated after the worst admissible GGLU displacements.
Equivalently, the original margin along \(u\) is large enough to absorb the region-stratified perturbations.
Hence any \(\theta\in(U_u(A),L_u(B))\) gives a separating hyperplane \(\{y\in\mathbb R^d\mid u^\top y=\theta\}\) for \(\rho_\sigma(A)\) and \(\rho_\sigma(B)\).
The condition is sufficient but not necessary, since \(\mathcal E(S)\) may strictly contain the actual transformed set \(\rho_\sigma(S)\).

Scenario 1 only preserves the already separated subset \(B_b(f)\).
To improve the directional LSM, one must recover additional samples from the misclassified set \(B_a(f)\).
We next consider the in-hull part
\[
  B_{in}(f)=B_a(f)\cap\operatorname{conv}(A).
\]
Unlike \(B_b(f)\), these samples cannot be separated from \(A\) in the original space, since they lie inside \(\operatorname{conv}(A)\).
Therefore, Scenario 2 studies a different mechanism: the GGLU may destroy the original convex-combination representation and push some samples in \(B_{in}(f)\) outside the transformed convex hull \(\operatorname{conv}(\rho_\sigma(A))\).

\begin{proposition}[GGLU-induced convex-combination defect]
  \label{prop:gglu_defect_mechanism}
  Let \(\rho_\sigma\) satisfy Assumption~\ref{assu:gglu}, and set \(\delta_\sigma(t)=\rho_\sigma(t)-t\).
  Let \(A\subset\mathbb R^d\) be finite, and let \(b\in\operatorname{conv}(A)\) admit a convex representation
  \[
    b=\sum_{i=0}^k\lambda_i a_i,
    \qquad
    a_i\in A,\quad \lambda_i\ge0,\quad \sum_{i=0}^k\lambda_i=1.
  \]
  Define
  \[
    \bar p_\lambda=\sum_{i=0}^k\lambda_i\rho_\sigma(a_i),
    \qquad
    E_\lambda(b)=\rho_\sigma(b)-\bar p_\lambda.
  \]
  Then
  \[
    E_\lambda(b)
    =
    \delta_\sigma(b)-\sum_{i=0}^k\lambda_i\delta_\sigma(a_i),
  \]
  coordinatewise. 
  In particular, for each coordinate \(j\),
  \[
    E_{\lambda,j}(b)
    =
    \delta_\sigma(b_j)-\sum_{i=0}^k\lambda_i\delta_\sigma(a_{ij}).
  \]
  Moreover, the displacement \(\delta_\sigma(t)\) is governed by the GGLU region of \(t\): 
  if \(t\in I_+\), then \(\delta_\sigma(t)\in[-\kappa_+,0]\); 
  if \(t\in I_0\), then \(\delta_\sigma(t)\in[0,\mu-\tau]\); 
  and if \(t\in I_-\), then \(\delta_\sigma(t)\ge\mu-\tau\). 
  In addition, the activated value itself satisfies \(0\le\rho_\sigma(t)\le t\) on \(I_+\), and \(\mu\le\rho_\sigma(t)\le0\) on \(I_0\cup I_-\).
\end{proposition}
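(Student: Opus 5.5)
The plan is to prove the identity for \(E_\lambda(b)\) by direct expansion, and then read off the regional bounds from Lemma~\ref{lem:gglu_region_box} and Assumption~\ref{assu:gglu}.

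\textbf{Step 1: the identity.} Write \(\rho_\sigma(t)=t+\delta_\sigma(t)\) coordinatewise. Then
\[
\bar p_\lambda=\sum_{i=0}^k\lambda_i\bigl(a_i+\delta_\sigma(a_i)\bigr)=\sum_{i=0}^k\lambda_i a_i+\sum_{i=0}^k\lambda_i\delta_\sigma(a_i)=b+\sum_{i=0}^k\lambda_i\delta_\sigma(a_i).
\]
This uses the convex representation of \(b\); only \(\sum_i\lambda_i a_i=b\) is needed here. Similarly \(\rho_\sigma(b)=b+\delta_\sigma(b)\). Subtracting gives
\[
E_\lambda(b)=\delta_\sigma(b)-\sum_{i=0}^k\lambda_i\delta_\sigma(a_i).
\]
Since \(\rho_\sigma\) and \(\delta_\sigma\) act coordinatewise, the \(j\)-th coordinate formula follows by taking the \(j\)-th component of each vector.

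\textbf{Step 2: the region bounds.} These are the scalar content of Lemma~\ref{lem:gglu_region_box}, restated without the set-dependent upper endpoint on \(I_-\). I will argue each region directly.

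On \(I_+\), the basic GGLU lemma gives \(0\le\rho_\sigma(t)\le t\), so \(\delta_\sigma(t)\le 0\). Also \(\delta_\sigma(t)=-t(1-\sigma(t))\ge-\kappa_+\) by the definition of \(\kappa_+\).

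On \(I_0=[\tau,0)\), monotonicity of \(\rho_\sigma\) on \([\tau,\infty)\) together with \(\rho_\sigma(t)\le0\) for \(t\le0\) gives \(\mu\le\rho_\sigma(t)\le0\). Therefore \(\delta_\sigma(t)=\rho_\sigma(t)-t\ge -t>0\). For the upper bound, \(\delta_\sigma(t)\le -t\le-\tau\). The claimed bound \(\mu-\tau\) is sharper than this, and I flag it as the only delicate point. I would try to obtain it from the structure \(\rho_\sigma(t)\le\mu+(t-\tau)\) on \([\tau,0]\), that is, a 1-Lipschitz-type growth from the minimum. Otherwise I would follow the paper's Lemma~\ref{lem:gglu_region_box}, which asserts exactly this interval, and invoke it as already established.

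On \(I_-\), \(\rho_\sigma(t)\ge\mu\) because \(\mu\) is the global minimum. Then \(\delta_\sigma(t)\ge\mu-t>\mu-\tau\).

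\textbf{Step 3: the activated-value ranges.} The range \(0\le\rho_\sigma(t)\le t\) on \(I_+\) is part 2 of the basic lemma. The range \(\mu\le\rho_\sigma(t)\le 0\) on \(I_0\cup I_-\) combines global minimality of \(\mu\) with part 1 of the basic lemma.

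The main obstacle is the upper endpoint \(\mu-\tau\) on \(I_0\); I would rely on Lemma~\ref{lem:gglu_region_box} for it. Everything else is a one-line expansion.
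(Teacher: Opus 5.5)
Your route is the paper's. The identity comes from the same expansion \(\rho_\sigma=\mathrm{id}+\delta_\sigma\), using only \(\sum_i\lambda_i a_i=b\). For the regional bounds the paper simply cites Assumption~\ref{assu:gglu} and Lemma~\ref{lem:gglu_region_box}, which is your fallback. Step~1, your \(I_+\) and \(I_-\) bounds, and Step~3 are fine as written.

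One inequality in your treatment of \(I_0\) is wrong, though. From \(\mu\le\rho_\sigma(t)\le0\) you cannot conclude \(\delta_\sigma(t)=\rho_\sigma(t)-t\ge -t\). Since \(\rho_\sigma(t)\le0\), you get the reverse, \(\delta_\sigma(t)\le -t\), which you then also use as your upper bound. Taken together, your two claims would force \(\rho_\sigma\equiv0\) on \(I_0\), which is false for SiLU. The correct lower bound comes from writing, for \(t\le0\),
\[
\delta_\sigma(t)=t\sigma(t)-t=|t|\bigl(1-\sigma(t)\bigr)\ge0,
\]
which is part~1 of the basic GGLU lemma.

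The same formula also closes the point you flag. On \((-\infty,0]\), \(\delta_\sigma\) is a product of two nonnegative, nonincreasing functions of \(t\): \(|t|\), and \(1-\sigma(t)\), the latter because \(\sigma\) is nondecreasing. Hence \(\delta_\sigma\) is nonincreasing there. This gives \(0=\delta_\sigma(0)\le\delta_\sigma(t)\le\delta_\sigma(\tau)=\mu-\tau\) for \(t\in[\tau,0)\), and \(\delta_\sigma(t)\ge\mu-\tau\) for \(t<\tau\). This is exactly your Lipschitz-type inequality \(\rho_\sigma(t)-\mu\le t-\tau\) on \([\tau,0]\), and it needs only monotonicity of \(\sigma\).

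It is worth including this argument rather than citing the lemma. The proof of Lemma~\ref{lem:gglu_region_box} only asserts the interval \([0,\mu-\tau]\) after noting \(\rho_\sigma(t)\in[\mu,0]\). That range alone yields only \(\delta_\sigma(t)\in[\mu-t,-t]\), which implies neither endpoint.
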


\begin{proof}
  Since \(\rho_\sigma(x)=x+\delta_\sigma(x)\) coordinatewise, we have
  \[
    \rho_\sigma(b)=b+\delta_\sigma(b),
    \qquad
    \sum_{i=0}^k\lambda_i\rho_\sigma(a_i)
    =
    \sum_{i=0}^k\lambda_i a_i
    +
    \sum_{i=0}^k\lambda_i\delta_\sigma(a_i).
  \]
  Using \(b=\sum_i\lambda_i a_i\), we obtain
  \[
    E_\lambda(b)
    =
    \rho_\sigma(b)-\sum_i\lambda_i\rho_\sigma(a_i)
    =
    \delta_\sigma(b)-\sum_i\lambda_i\delta_\sigma(a_i).
  \]
  The coordinatewise formula follows immediately.
  The displacement bounds and the activated-value range constraints follow from Assumption~\ref{assu:gglu} and Lemma~\ref{lem:gglu_region_box}.
\end{proof}

\begin{corollary}[GGLU-induced escape from the transformed convex hull]
  \label{cor:gglu_defect_escape}
  Under the assumptions of Proposition~\ref{prop:gglu_defect_mechanism}, let
  \[
    h_{\rho_\sigma(A)}(u)=\max_{a\in A}u^\top\rho_\sigma(a).
  \]
  If there exists \(u\in\mathbb R^d\setminus\{0\}\) such that
  \[
    u^\top E_\lambda(b)>
    h_{\rho_\sigma(A)}(u)-u^\top\bar p_\lambda,
  \]
  then
  \[
    \rho_\sigma(b)\notin\operatorname{conv}(\rho_\sigma(A)).
  \]
\end{corollary}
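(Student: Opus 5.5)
The plan is to exhibit a separating linear functional and compare it with the support function of the transformed hull. The key observation is that \(\operatorname{conv}(\rho_\sigma(A))\) is contained in the halfspace \(\{y\mid u^\top y\le h_{\rho_\sigma(A)}(u)\}\). The reason is that a linear functional attains its maximum over the convex hull of a finite set at one of the generating points: for any \(y=\sum_i\nu_i\rho_\sigma(a_i)\) with \(\nu_i\ge0\) and \(\sum_i\nu_i=1\), we have \(u^\top y=\sum_i\nu_i u^\top\rho_\sigma(a_i)\le h_{\rho_\sigma(A)}(u)\). I would state this first, since it needs nothing beyond finiteness of \(A\) (so the maximum defining \(h_{\rho_\sigma(A)}(u)\) exists) and linearity of \(u^\top\).

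Next, I would use the decomposition \(\rho_\sigma(b)=\bar p_\lambda+E_\lambda(b)\), which is the definition of \(E_\lambda(b)\) in Proposition~\ref{prop:gglu_defect_mechanism}. Projecting onto \(u\) gives
\[
u^\top\rho_\sigma(b)=u^\top\bar p_\lambda+u^\top E_\lambda(b).
\]
Rearranging the hypothesis gives \(u^\top\bar p_\lambda+u^\top E_\lambda(b)>h_{\rho_\sigma(A)}(u)\). Hence \(u^\top\rho_\sigma(b)>h_{\rho_\sigma(A)}(u)\), so \(\rho_\sigma(b)\) lies strictly outside the halfspace that contains \(\operatorname{conv}(\rho_\sigma(A))\). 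This gives \(\rho_\sigma(b)\notin\operatorname{conv}(\rho_\sigma(A))\). The hyperplane \(\{y\mid u^\top y=h_{\rho_\sigma(A)}(u)\}\) also separates \(\rho_\sigma(b)\) from \(\rho_\sigma(A)\), which links this corollary to the supporting-hyperplane viewpoint of Theorem~\ref{thm:optim_sup_h}.

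No step here is a real obstacle; the argument is essentially a one-line support-function inequality. The only point needing care is to note that \(\bar p_\lambda\) itself lies in \(\operatorname{conv}(\rho_\sigma(A))\), so \(h_{\rho_\sigma(A)}(u)-u^\top\bar p_\lambda\ge0\). This shows the hypothesis genuinely requires the defect \(E_\lambda(b)\) to push \(\rho_\sigma(b)\) past the supporting hyperplane in direction \(u\), rather than holding vacuously. I would add this as a remark, together with the observation that the regional displacement bounds of Proposition~\ref{prop:gglu_defect_mechanism} can be used to certify the hypothesis in practice. Neither point is needed for the logical proof.
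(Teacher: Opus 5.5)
Your proof is correct and follows the paper's support-function argument: every point of \(\operatorname{conv}(\rho_\sigma(A))\) projects to at most \(h_{\rho_\sigma(A)}(u)\) along \(u\), and \(\rho_\sigma(b)=\bar p_\lambda+E_\lambda(b)\) together with the hypothesis gives \(u^\top\rho_\sigma(b)>h_{\rho_\sigma(A)}(u)\). Your convex-combination justification of the halfspace containment is cleaner than the paper's, which gives \(\bar p_\lambda\in\operatorname{conv}(\rho_\sigma(A))\) as the reason for that bound even though that fact does not imply it.
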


\begin{proof}
  Since \(\bar p_\lambda\in\operatorname{conv}(\rho_\sigma(A))\), every point in
  \(\operatorname{conv}(\rho_\sigma(A))\) has projection at most
  \(h_{\rho_\sigma(A)}(u)\) along \(u\). By the assumed inequality,
  \[
    u^\top\rho_\sigma(b)
    =
    u^\top\bar p_\lambda+u^\top E_\lambda(b)
    >
    h_{\rho_\sigma(A)}(u).
  \]
  Hence \(\rho_\sigma(b)\) lies beyond the supporting boundary of
  \(\operatorname{conv}(\rho_\sigma(A))\) in the direction \(u\), and therefore
  \(\rho_\sigma(b)\notin\operatorname{conv}(\rho_\sigma(A))\).
\end{proof}

The escape mechanism in Scenario 2 is specific to the coordinatewise displacement structure of the GGLU.
If \(b=\sum_i\lambda_i a_i\), then the convex-combination defect after GGLU is
\[
  E_\lambda(b)=\rho_\sigma(b)-\sum_i\lambda_i\rho_\sigma(a_i)
  =\delta_\sigma(b)-\sum_i\lambda_i\delta_\sigma(a_i),
\]
where \(\delta_\sigma(t)=\rho_\sigma(t)-t\).
Thus the defect is determined by the difference between the GGLU displacement of the in-hull point \(b\) and the average displacement of the vertices that generate it.
By Assumption~\ref{assu:gglu}, this displacement has a region-dependent sign and magnitude: it is nonpositive on \(I_+\), nonnegative on \(I_0\), and uniformly positive beyond the negative well region \(I_-\).
Consequently, if \(b\) and its representing vertices occupy different coordinate regions, the GGLU may create a coherent defect in some direction \(u\).
When this defect exceeds the remaining support slack of \(\operatorname{conv}(\rho_\sigma(A))\), the transformed point \(\rho_\sigma(b)\) leaves the transformed convex hull.
This explains why an in-hull sample may become separable after GGLU, even though it was originally contained in \(\operatorname{conv}(A)\).

Scenario 2 describes how some samples in \(B_{in}(f)\) may escape from \(\operatorname{conv}(\rho_\sigma(A))\).
We now turn to the other part of the misclassified set,
\[
  B_{out}(f)=B_a(f)\setminus B_{in}(f).
\]
Unlike \(B_{in}(f)\), these samples already lie outside \(\operatorname{conv}(A)\), but unlike \(B_b(f)\), they are still misclassified by the original boundary \(f\).
Therefore, Scenario 3 asks when the transformed out-of-hull samples can be separated from \(\rho_\sigma(A)\) together with the already separated samples.
The key point is joint separability: \(B_b(f)\) and \(B_{out}(f)\) must be separated from \(\rho_\sigma(A)\) by one common affine boundary after GGLU.

\begin{proposition}[Joint robust separation of preserved and out-of-hull samples]
  \label{prop:gglu_joint_Bb_Bout}
  Let \(A,B\subset\mathbb R^d\) be finite point sets, and let \(f:\mathbb R^d\to\mathbb R\) be affine with \(f(a)\ge0\) for all \(a\in A\). 
  Let \(B_b(f)\) and \(B_{out}(f)\) be defined as in Definition~\ref{def:subsets_B}. 
  Assume that \(B_b(f)\neq\emptyset\) and \(B_{out}(f)\neq\emptyset\).
  Suppose that \(\rho_\sigma\) satisfies Assumption~\ref{assu:gglu}.
  For a finite set \(S\), let \(S_\chi\) and \(J_{\chi j}(S)\) be defined as in
  Lemma~\ref{lem:gglu_region_box}. For \(u\in\mathbb R^d\), define
  \[
    \Gamma_\chi(u;S)=\sum_{j=1}^d u_jJ_{\chi j}(S),
    \qquad
    \Gamma_\chi^-(u;S)=\inf\Gamma_\chi(u;S),\quad
    \Gamma_\chi^+(u;S)=\sup\Gamma_\chi(u;S).
  \]
  Define
  \[
    U_u(A)=
    \max_{\chi:A_\chi\neq\emptyset}
    \left\{
      \max_{a\in A_\chi}u^\top a+\Gamma_\chi^+(u;A)
    \right\},
  \]
  and, for \(S=B_b(f)\) or \(S=B_{out}(f)\),
  \[
    L_u(S)=
    \min_{\chi:S_\chi\neq\emptyset}
    \left\{
      \min_{s\in S_\chi}u^\top s+\Gamma_\chi^-(u;S)
    \right\}.
  \]
  Assume that there exists \(u\in\mathbb R^d\setminus\{0\}\) such that
  \[
    U_u(A)<L_u\bigl(B_b(f)\bigr)
    \qquad\text{and}\qquad
    U_u(A)<L_u\bigl(B_{out}(f)\bigr).
  \]
  Then
  \[
    \rho_\sigma(A)
    \quad\text{and}\quad
    \rho_\sigma\bigl(B_b(f)\cup B_{out}(f)\bigr)
  \]
  are linearly separable.
\end{proposition}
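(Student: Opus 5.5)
The plan is to reduce the statement to Proposition~\ref{prop:gglu_preserve_region_robust}, or equivalently to repeat its one-line threshold argument. The two margin conditions are stated separately for \(B_b(f)\) and \(B_{out}(f)\), so no new geometric input beyond Lemma~\ref{lem:gglu_region_box} should be needed.

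First, set \(L^\ast=\min\{L_u(B_b(f)),L_u(B_{out}(f))\}\). The hypothesis gives \(U_u(A)<L^\ast\), so we can fix a threshold \(\theta\in(U_u(A),L^\ast)\).

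Second, for every \(a\in A_\chi\), Lemma~\ref{lem:gglu_region_box} (applied with \(S=A\)) gives \(u^\top(\rho_\sigma(a)-a)\in\Gamma_\chi(u;A)\). Hence
\[
u^\top\rho_\sigma(a)\le u^\top a+\Gamma^+_\chi(u;A)\le U_u(A)<\theta .
\]

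Third, apply Lemma~\ref{lem:gglu_region_box} separately with \(S=B_b(f)\) and with \(S=B_{out}(f)\). Each is a nonempty finite set, and its strata \(S_\chi\) and intervals \(J_{\chi j}(S)\) are computed from \(S\) itself, which is exactly how \(L_u(S)\) is defined in the statement. For \(s\in S_\chi\) this yields
\[
u^\top\rho_\sigma(s)\ge u^\top s+\Gamma^-_\chi(u;S)\ge L_u(S)\ge L^\ast>\theta .
\]
Since \(\rho_\sigma(B_b(f)\cup B_{out}(f))=\rho_\sigma(B_b(f))\cup\rho_\sigma(B_{out}(f))\), every transformed point of the union lies strictly on the side \(u^\top y>\theta\). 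Therefore the hyperplane \(\{y\mid u^\top y=\theta\}\) strictly separates \(\rho_\sigma(A)\) from \(\rho_\sigma(B_b(f)\cup B_{out}(f))\). This is the required linear separability.

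The only point needing care is consistency of the stratification. The intervals \(J_{\chi j}(S)\) for \(\chi_j=-\) depend on \(S\) through \(\kappa^-_{\chi,j}(S)\). One cannot simply invoke Proposition~\ref{prop:gglu_preserve_region_robust} with \(B=B_b(f)\cup B_{out}(f)\), because then \(L_u\) would be computed with the union's \(\kappa^-\). However, only the lower endpoint \(\mu-\tau\) enters \(\Gamma^-\) when \(u_j\ge0\), and the upper endpoint \(\kappa^-\) enters when \(u_j<0\), where it is monotone in \(S\). So applying the lemma set by set, as above, avoids the issue entirely: the bound for each \(s\) only uses \(\kappa^-\) of the set containing \(s\).

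Finally, note that the hypotheses \(f\in\mathcal F_A\) and \(B_{out}(f)\cap\operatorname{conv}(A)=\emptyset\) are not actually used. They only identify which subsets of \(B\) the result is meant for.
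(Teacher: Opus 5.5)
Your proof is correct and is essentially the paper's argument: choose \(\theta\) strictly between \(U_u(A)\) and \(\min\{L_u(B_b(f)),L_u(B_{out}(f))\}\), then apply Lemma~\ref{lem:gglu_region_box} separately to \(A\), \(B_b(f)\), and \(B_{out}(f)\) to place the transformed projections on opposite sides of the threshold. Your side remark is also accurate: invoking Proposition~\ref{prop:gglu_preserve_region_robust} on the union would not work, because the union's \(\kappa^-_{\chi,j}\) can only lower \(\Gamma_\chi^-\) when \(u_j<0\), and the set-by-set application you use is exactly what the paper does.
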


\begin{proof}
  Let
  \[
    \Theta
    =
    \min\left\{
      L_u\bigl(B_b(f)\bigr),
      L_u\bigl(B_{out}(f)\bigr)
    \right\}.
  \]
  By assumption, \(U_u(A)<\Theta\). Choose
  \(\theta\in(U_u(A),\Theta)\). For any \(a\in A_\chi\), Lemma~\ref{lem:gglu_region_box} gives
  \[
    u^\top\rho_\sigma(a)
    \le
    u^\top a+\Gamma_\chi^+(u;A)
    \le U_u(A)<\theta.
  \]
  Similarly, for any \(b\in (B_b(f))_\chi\),
  \[
    u^\top\rho_\sigma(b)
    \ge
    u^\top b+\Gamma_\chi^-\bigl(u;B_b(f)\bigr)
    \ge
    L_u\bigl(B_b(f)\bigr)
    \ge \Theta>\theta.
  \]
  Also, for any \(z\in (B_{out}(f))_\chi\),
  \[
    u^\top\rho_\sigma(z)
    \ge
    u^\top z+\Gamma_\chi^-\bigl(u;B_{out}(f)\bigr)
    \ge
    L_u\bigl(B_{out}(f)\bigr)
    \ge \Theta>\theta.
  \]
  Hence the hyperplane
  \[
    \{y\in\mathbb R^d\mid u^\top y=\theta\}
  \]
  separates \(\rho_\sigma(A)\) from both \(\rho_\sigma(B_b(f))\) and
  \(\rho_\sigma(B_{out}(f))\). Therefore it separates \(\rho_\sigma(A)\) from
  \[
    \rho_\sigma\bigl(B_b(f)\cup B_{out}(f)\bigr).
  \]
\end{proof}

Proposition~\ref{prop:gglu_preserve_region_robust} ensures that \(B_b(f)\), the already separated part, can remain separable from \(\rho_\sigma(A)\) after GGLU under a robust margin condition.
Proposition~\ref{prop:gglu_joint_Bb_Bout} strengthens this preservation result by requiring the out-of-hull misclassified samples \(B_{out}(f)\) to be recovered by the same separating direction.
The first inequality \(U_u(A)<L_u(B_b(f))\) preserves the separation of \(B_b(f)\), while the second inequality \(U_u(A)<L_u(B_{out}(f))\) forces \(B_{out}(f)\) to lie on the same separable side after the worst region-stratified GGLU perturbations.
Hence one may choose
\[
  \theta\in\left(
    U_u(A),
    \min\{L_u(B_b(f)),L_u(B_{out}(f))\}
  \right),
\]
and the hyperplane \(\{y\in\mathbb R^d\mid u^\top y=\theta\}\) separates \(\rho_\sigma(A)\) from \(\rho_\sigma(B_b(f)\cup B_{out}(f))\).
Thus the proposition should be interpreted primarily as a sufficient condition for recovering \(B_{out}(f)\) while preserving the already separated samples \(B_b(f)\).

The three scenarios above identify a class of data distributions for which the GGLU map can improve the directional LSM.
This class is not meant to include arbitrary point sets.
Rather, it consists of point-set configurations in which already separated samples can be preserved, out-of-hull misclassified samples can be jointly recovered, and some in-hull samples can escape from the transformed convex hull.
For such distributions, the GGLU transformation reduces the number of class-\(b\) samples classified as class \(a\).

\begin{theorem}[GGLU-favorable distributions improve the directional LSM]
  \label{thm:gglu_lsm_improvement}
  Let \(A,B\subset\mathbb R^d\) be finite point sets with \(A\neq\emptyset\), and let
  \(f\in\mathcal F_A\) attain \(s(A,B)\) in Definition~\ref{def:d_lsm}.
  Let \(B_b(f)\), \(B_{in}(f)\), and \(B_{out}(f)\) be defined as in
  Definition~\ref{def:subsets_B}, and set \(\bar A=\rho_\sigma(A)\),
  \(\bar B=\rho_\sigma(B)\).

  Suppose that the distribution of \(A\) and \(B\) realizes the following
  GGLU-favorable pattern:
  \begin{enumerate}
    \item \(B_b(f)\) is preserved after GGLU as in
    Proposition~\ref{prop:gglu_preserve_region_robust};
    \item \(B_b(f)\) and \(B_{out}(f)\) are jointly separated from \(\bar A\)
    after GGLU as in Proposition~\ref{prop:gglu_joint_Bb_Bout};
    \item there exists \(R_{in}\subseteq B_{in}(f)\) whose samples escape
    \(\operatorname{conv}(\bar A)\) by the GGLU-induced convex-combination
    defect, and these samples are separated from \(\bar A\) by the same
    transformed affine boundary.
  \end{enumerate}
  Then there exists \(\bar f\in\mathcal F_{\bar A}\) such that
  \(\bar f(\rho_\sigma(b))<0\) for all
  \(b\in B_b(f)\cup B_{out}(f)\cup R_{in}\), and
  \[
    s(\bar A,\bar B)
    \ge
    s(A,B)+\frac{|B_{out}(f)|+|R_{in}|}{|A|}.
  \]
  In particular, if \(B_{out}(f)\cup R_{in}\neq\emptyset\), then
  \(s(\bar A,\bar B)>s(A,B)\).
\end{theorem}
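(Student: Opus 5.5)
The plan is to build the transformed affine function explicitly from the common separating hyperplane supplied by hypotheses 2 and 3, and then count violations. Hypothesis 2, through Proposition~\ref{prop:gglu_joint_Bb_Bout}, gives a direction \(u\neq0\) and a threshold \(\theta\in(U_u(A),\min\{L_u(B_b(f)),L_u(B_{out}(f))\})\). The proof of that proposition shows \(u^\top\rho_\sigma(a)<\theta\) for every \(a\in A\), and \(u^\top\rho_\sigma(b)>\theta\) for every \(b\in B_b(f)\cup B_{out}(f)\). Hypothesis 3 says the escaped samples \(R_{in}\) are separated from \(\bar A\) by this same transformed boundary, so \(u^\top\rho_\sigma(b)>\theta\) also holds for \(b\in R_{in}\). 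Define
\[
\bar f(y)=\theta-u^\top y .
\]
Then \(\bar f(\rho_\sigma(a))>0\) for all \(a\in A\), so \(\bar f\in\mathcal F_{\bar A}\), and \(\bar f(\rho_\sigma(b))<0\) for all \(b\in B_b(f)\cup B_{out}(f)\cup R_{in}\). This proves the first claim. Hypothesis 1 is then redundant, since it is contained in hypothesis 2.

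Next, count the violations of \(\bar f\). The only samples \(\bar b=\rho_\sigma(b)\in\bar B\) that can satisfy \(\bar f(\bar b)\ge0\) come from \(b\in B_{in}(f)\setminus R_{in}\). Since \(B=B_b(f)\sqcup B_{in}(f)\sqcup B_{out}(f)\) by Definition~\ref{def:subsets_B}, this gives
\[
|\bar B_a(\bar f)|\le |B_a(f)|-|B_{out}(f)|-|R_{in}| .
\]
The counting has to be done with multiplicity, as I explain below. Because \(f\) attains \(s(A,B)\), we have \(s(A,B)=1-|B_a(f)|/|A|\). Therefore
\[
s(\bar A,\bar B)\ge s_{\bar f}(\bar A,\bar B)\ge 1-\frac{|B_a(f)|-|B_{out}(f)|-|R_{in}|}{|\bar A|}.
\]

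The main obstacle is the cardinality bookkeeping, because \(\rho_\sigma\) need not be injective. ReLU, for example, can collapse distinct points of \(A\), and then \(|\bar A|<|A|\). I will handle this by treating \(A,B\) as indexed samples (multisets), which matches the finite labeled dataset in Definition~\ref{def:d_lsm}. Under that convention \(|\bar A|=|A|\), and each \(b\in B\) contributes exactly one image to the violation count, so the displayed bound becomes exactly \(s(A,B)+(|B_{out}(f)|+|R_{in}|)/|A|\).

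If the sets must be literal sets, a weaker argument still works. Collapsing can only merge violators, so \(|\bar B_a(\bar f)|\) is still bounded as above. The denominator, however, may shrink. I would then either state injectivity on \(A\) as a standing assumption, or note that the score is monotone in the denominator only when the numerator is nonnegative. Since \(1-m/|\bar A|\le 1-m/|A|\) when \(m\ge0\), the multiset reading is the right one.

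The strict-improvement clause is immediate once the bound is in place: if \(B_{out}(f)\cup R_{in}\neq\emptyset\), the added term is strictly positive. Here \(R_{in}\subseteq B_{in}(f)\) and \(B_{out}(f)\) are disjoint, so their sizes add.
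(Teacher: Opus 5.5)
Your proof is correct and is the paper's argument: take the common post-activation boundary \(\bar f(y)=\theta-u^\top y\), note that only images of \(B_{in}(f)\setminus R_{in}\) can still be violators, and put \(|\bar B_a(\bar f)|\le |B_{in}(f)|-|R_{in}|\) into Definition~\ref{def:d_lsm}. Where you go beyond the paper is the denominator: the paper writes \(|A|\) where the definition requires \(|\bar A|\), and your indexed-sample convention (or injectivity of \(\rho_\sigma\) on \(A\)) is genuinely needed there, because SiLU can send two distinct points of \(A\) to the same point (take \(x^\ast<\tau\) with \(\rho_\sigma(x^\ast)=\rho_\sigma(-1)\)), and then the stated bound can fail if \(\bar A\) is read as an image set.
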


\begin{proof}
  Since \(f\) attains \(s(A,B)\), Definition~\ref{def:d_lsm} gives
  \(s(A,B)=1-|B_a(f)|/|A|\). By Definition~\ref{def:subsets_B},
  \(B_a(f)=B_{in}(f)\cup B_{out}(f)\) and the union is disjoint. Hence
  \[
    s(A,B)=1-\frac{|B_{in}(f)|+|B_{out}(f)|}{|A|}.
  \]
  By the assumed GGLU-favorable pattern, \(\bar f\in\mathcal F_{\bar A}\)
  classifies all samples in \(B_b(f)\cup B_{out}(f)\cup R_{in}\) as class \(b\)
  after transformation. Therefore, the transformed class-\(b\) samples that may
  still be classified as class \(a\) are contained in \(B_{in}(f)\setminus R_{in}\).
  Thus
  \[
    |\bar B_a(\bar f)|\le |B_{in}(f)|-|R_{in}|.
  \]
  Using Definition~\ref{def:d_lsm} again,
  \[
    s(\bar A,\bar B)
    \ge
    1-\frac{|B_{in}(f)|-|R_{in}|}{|A|}
    =
    s(A,B)+\frac{|B_{out}(f)|+|R_{in}|}{|A|}.
  \]
  The strict inequality follows whenever \(B_{out}(f)\cup R_{in}\neq\emptyset\).
\end{proof}

Theorem~\ref{thm:gglu_lsm_improvement} should be understood as an existence-type sufficient statement for GGLU-favorable data distributions.
It does not claim that GGLU improves the directional LSM for arbitrary \(A\) and \(B\).
Rather, the three scenarios identify a class of distributions in which \(B_b(f)\) is preserved, \(B_{out}(f)\) is recovered jointly with \(B_b(f)\) by Proposition~\ref{prop:gglu_joint_Bb_Bout}, and a subset \(R_{in}\subseteq B_{in}(f)\) escapes from \(\operatorname{conv}(\bar A)\) through the GGLU-induced convex-combination defect.
For such distributions, there exists a transformed affine boundary \(\bar f\in\mathcal F_{\bar A}\) that classifies all samples in \(B_b(f)\cup B_{out}(f)\cup R_{in}\) as class \(b\).
Consequently, the number of class-\(b\) samples misclassified as class \(a\) decreases by at least \(|B_{out}(f)|+|R_{in}|\), and hence
\[
  s(\bar A,\bar B)
  \ge
  s(A,B)+\frac{|B_{out}(f)|+|R_{in}|}{|A|}.
\]

A key feature of the proof is that the separating direction \(u\) is fixed, but the offset of the separating hyperplane is not fixed.
After the GGLU transformation, the threshold \(\theta\) is chosen from the new projection gap, such as \((U_u(A),L_u(B))\) or \((U_u(A),\min\{L_u(B_b(f)),L_u(B_{out}(f))\})\).
Thus the hyperplane is allowed to translate along the fixed normal direction \(u\).
This is weaker and more natural than requiring the original affine boundary to remain unchanged.
Geometrically, the proof assumes stability of the separating direction rather than invariance of the whole separating hyperplane.

This fixed-direction formulation is reasonable when the relevant samples have a positive margin along \(u\).
By Theorem~\ref{thm:optim_sup_h}, an optimal LSM boundary may be chosen as a supporting hyperplane of \(A\), but a positive margin allows this boundary to be shifted slightly while preserving the classification of \(A\).
Under the GGLU map, the region-stratified perturbation bounds in Lemma~\ref{lem:gglu_region_box} quantify how much the projected positions may change along \(u\).
If the original margin dominates these worst-case displacements, then the same direction \(u\) remains feasible after GGLU, although the threshold may change.

This formulation also explains why the theorem is sufficient but not necessary.
If the fixed direction \(u\) fails to separate the transformed data, another direction may still separate them after GGLU.
However, allowing the direction to vary would require tracking the new supporting hyperplanes of \(\operatorname{conv}(\bar A)\) after a nonlinear coordinatewise transformation, which is substantially harder.
Fixing \(u\) while allowing the offset to move gives a tractable and directly checkable condition through \(U_u(A)\), \(L_u(B_b(f))\), and \(L_u(B_{out}(f))\).

A favorable distribution is one in which the coordinate-region pattern of \(A\) and the recoverable \(B\)-samples is aligned with a common separating direction \(u\).
Since \(\delta_\sigma(t)=\rho_\sigma(t)-t\) is nonpositive on \(I_+\) and nonnegative on \(I_0\cup I_-\), the GGLU can preserve or enlarge a margin when the support-relevant coordinates of \(A\) move away from the \(B\)-side, while the recoverable \(B\)-samples move toward the separable side.
For instance, when \(u_j>0\), coordinates of \(A\) in \(I_+\) tend to reduce the projection of \(A\), whereas coordinates of \(B_b(f)\), \(B_{out}(f)\), or escaped samples from \(B_{in}(f)\) in \(I_0\cup I_-\) tend to increase their projection.
The case \(u_j<0\) is symmetric, with the favorable regions reversed.
The negative-well point \(\tau\) is particularly relevant for \(B_{in}(f)\), because coordinates near or to the left of \(\tau\) can create a coherent convex-combination defect
\[
  E_\lambda(b)=\delta_\sigma(b)-\sum_i\lambda_i\delta_\sigma(a_i).
\]
If this defect is aligned with a supporting direction of \(\operatorname{conv}(\rho_\sigma(A))\), then an originally in-hull sample may escape the transformed convex hull.
Thus, GGLU is most likely to improve the directional LSM when the original data already have a positive margin and the region-dependent GGLU displacements preserve or enlarge this margin while creating a nontrivial defect for some in-hull samples.

\subsection{Directional LSM as a Geometric Guide for Linear--GGLU Network Design}

The preceding results show that the GGLU can improve the directional LSM for a class of GGLU-favorable data distributions.
We now discuss how the directional LSM can be used as a geometric guide for designing a linear--GGLU hidden layer.
This discussion should be understood as a distribution-dependent design principle rather than a universal architecture rule.

Let \(A,B\subset\mathbb R^N\) be finite point sets, and let \(f^\star\in\mathcal F_A\) attain, or approximately attain, \(s(A,B)\) in Definition~\ref{def:d_lsm}.
Then
\[
  B_a(f^\star)=\{b\in B\mid f^\star(b)\ge0\}
\]
is the subset of class-\(b\) samples that remain difficult to separate from \(A\) under the affine boundary \(f^\star\).
Thus \(s(A,B)\) measures the overall size of this difficult subset through
\[
  s(A,B)=1-\frac{|B_a(f^\star)|}{|A|}.
\]
However, the value of \(s(A,B)\) alone does not distinguish whether these difficult samples lie inside or outside \(\operatorname{conv}(A)\).
This distinction requires the decomposition in Definition~\ref{def:subsets_B}:
\[
  B_a(f^\star)=B_{in}(f^\star)\cup B_{out}(f^\star).
\]
Therefore, the LSM identifies the amount of affine nonseparability, while the decomposition into \(B_{in}(f^\star)\) and \(B_{out}(f^\star)\) identifies the geometric type of the difficult samples.

Consider a one-hidden-layer linear--GGLU representation
\[
  \Phi_V(x)=\rho_\sigma(V^\top x),
  \qquad
  V\in\mathbb R^{N\times H}.
\]
If \(H\ge N\) and \(\operatorname{rank}(V)=N\), then the linear map \(x\mapsto V^\top x\) preserves the directional LSM before activation by Theorem~\ref{thm:linear_trans}.
That is,
\[
  s(A,B)=s(V^\top A,V^\top B).
\]
Hence the linear layer itself does not improve the directional LSM in the full-rank embedding case.
Its role is instead to change the coordinate representation of the data before the GGLU is applied.
Since the GGLU acts coordinatewise, the projected coordinates \(v_j^\top x\) determine whether each sample lies in the regions \(I_+\), \(I_0\), or \(I_-\).
Thus the linear layer can be interpreted as choosing projection coordinates that may place the data into a GGLU-favorable region pattern.

\begin{proposition}[Distribution-dependent improvement by a linear--GGLU layer]
  \label{prop:linear_gglu_distribution_improvement}
  Let \(A,B\subset\mathbb R^N\) be finite point sets with \(A\neq\emptyset\).
  Let \(V\in\mathbb R^{N\times H}\) satisfy \(H\ge N\) and
  \(\operatorname{rank}(V)=N\), and define
  \[
    A_V=V^\top A,\qquad B_V=V^\top B.
  \]
  Suppose that \(\rho_\sigma\) satisfies Assumption~\ref{assu:gglu}.
  If the projected data \(A_V,B_V\) realize the GGLU-favorable pattern in
  Theorem~\ref{thm:gglu_lsm_improvement}, then
  \[
    s\bigl(\rho_\sigma(A_V),\rho_\sigma(B_V)\bigr)>s(A,B).
  \]
\end{proposition}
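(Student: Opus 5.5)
The argument chains Theorem~\ref{thm:linear_trans} with Theorem~\ref{thm:gglu_lsm_improvement}, with $A_V,B_V$ playing the role of the sets in the latter. First, since $H\ge N$ and $\operatorname{rank}(V)=N$, Theorem~\ref{thm:linear_trans} applies directly and gives
\[
  s(A,B)=s(A_V,B_V).
\]
The injectivity of $x\mapsto V^\top x$ established in that proof also gives $|A_V|=|A|$ and $|B_V|=|B|$. This ensures that the normalization and violation counts are unaffected, and that $A_V\neq\emptyset$.

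Second, I will apply Theorem~\ref{thm:gglu_lsm_improvement} in the ambient space $\mathbb R^H$ to the finite sets $A_V,B_V$. By hypothesis these realize the GGLU-favorable pattern. Reading that hypothesis as in the theorem, there is an affine $f_V\in\mathcal F_{A_V}$ attaining $s(A_V,B_V)$, with induced decomposition $B_b(f_V)$, $B_{in}(f_V)$, $B_{out}(f_V)$, such that conditions 1--3 hold for some $R_{in}\subseteq B_{in}(f_V)$. The theorem then yields
\[
  s\bigl(\rho_\sigma(A_V),\rho_\sigma(B_V)\bigr)\ge s(A_V,B_V)+\frac{|B_{out}(f_V)|+|R_{in}|}{|A_V|}.
\]
Combining this with the first step and $|A_V|=|A|$ gives the non-strict inequality $s(\rho_\sigma(A_V),\rho_\sigma(B_V))\ge s(A,B)$.

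The main obstacle is strictness. Theorem~\ref{thm:gglu_lsm_improvement} gives strict improvement only when $B_{out}(f_V)\cup R_{in}\neq\emptyset$. I will therefore interpret ``realize the GGLU-favorable pattern'' as including nontriviality, i.e.\ at least one misclassified sample is recovered. This matches how the paper describes the pattern, in which out-of-hull samples are jointly recovered and some in-hull samples escape. I would state this reading explicitly in the proof. Without it the pattern is satisfied vacuously when $B_a(f_V)=\emptyset$, which happens when $s(A,B)=1$; in that case strict improvement is impossible, since $s\le1$. So under this reading the nonempty recovered set contributes at least $1/|A|>0$, giving strict inequality. A minor check is that the separating boundary built in Propositions~\ref{prop:gglu_preserve_region_robust} and~\ref{prop:gglu_joint_Bb_Bout} has the orientation required for $\mathcal F_{\bar A}$. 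Taking $\bar f(y)=\theta-u^\top y$ gives $\bar f>0$ on $\rho_\sigma(A_V)$ and $\bar f<0$ on the recovered $B$-samples, so it is feasible. Condition 3 supplies the same for $R_{in}$, and the theorem's counting argument carries over verbatim in $\mathbb R^H$.
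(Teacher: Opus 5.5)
Your proposal is correct and follows the paper's proof: apply Theorem~\ref{thm:linear_trans} to get $s(A,B)=s(A_V,B_V)$, then apply Theorem~\ref{thm:gglu_lsm_improvement} to $A_V,B_V$ in $\mathbb R^H$ and chain the two results. Your handling of strictness is more careful than the paper, which asserts the strict inequality without comment, and you do not need nontriviality as an extra assumption: condition 2 invokes Proposition~\ref{prop:gglu_joint_Bb_Bout}, whose hypotheses already include $B_{out}(f_V)\neq\emptyset$, so $B_{out}(f_V)\cup R_{in}\neq\emptyset$ holds automatically.
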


\begin{proof}
  Since \(V\) has full row rank, Theorem~\ref{thm:linear_trans} gives
  \[
    s(A,B)=s(A_V,B_V).
  \]
  Since \(A_V,B_V\) realize the GGLU-favorable pattern in
  Theorem~\ref{thm:gglu_lsm_improvement}, we have
  \[
    s\bigl(\rho_\sigma(A_V),\rho_\sigma(B_V)\bigr)>s(A_V,B_V).
  \]
  Combining the two inequalities gives
  \[
    s\bigl(\rho_\sigma(A_V),\rho_\sigma(B_V)\bigr)>s(A,B).
  \]
\end{proof}

Proposition~\ref{prop:linear_gglu_distribution_improvement} formalizes the role of the linear layer.
The full-rank linear map preserves the LSM before activation, but it may transform the coordinate distribution of the data into a form that is favorable for GGLU.
Thus the improvement comes from the activation acting on a suitable projected representation, not from the linear embedding alone.
In this sense, the number of neurons \(H\) should be interpreted as the number of available projection coordinates for creating GGLU-favorable region patterns.

This viewpoint also suggests a width-design principle.
The first \(N\) coordinates may be used to preserve a full-rank representation of the original data.
Additional coordinates may be used to target samples in \(B_a(f^\star)\).
Samples in \(B_{out}(f^\star)\) are natural candidates for separation-type coordinates, because they already lie outside \(\operatorname{conv}(A)\).
Samples in \(B_{in}(f^\star)\) require a different mechanism, since they lie inside \(\operatorname{conv}(A)\) and cannot be separated from \(A\) by a linear boundary before activation.
For such samples, the relevant mechanism is the GGLU-induced convex-combination defect described in Scenario 2.
Accordingly, a heuristic width guideline is
\[
  H\approx N+|B_{out}(f^\star)|+|R_{in}|,
  \qquad
  R_{in}\subseteq B_{in}(f^\star),
\]
where \(R_{in}\) denotes the subset of in-hull samples that one aims to recover through the GGLU defect mechanism.
This guideline is only constructive and sufficient in spirit.
One neuron may help multiple samples, and some difficult samples may not require a dedicated coordinate.
Conversely, assigning a dedicated coordinate to a sample does not guarantee recovery unless the resulting projection satisfies the corresponding GGLU-favorable distributional condition.

Depth has a different geometric interpretation.
Width provides more projection coordinates within one layer, whereas depth provides repeated opportunities to change the convex geometry of the data.
After one linear--GGLU layer, some samples originally belonging to \(B_{in}(f^\star)\) may escape from the transformed convex hull and behave like out-of-hull samples in the next representation.
Thus, across layers, one may track the evolution
\[
  B_{in}^{(\ell)}
  \longrightarrow
  B_{out}^{(\ell+1)}
  \longrightarrow
  \text{linearly separable samples}.
\]
Samples that remain in the in-hull part after one layer may require further layers rather than only more neurons in the same layer.
In this sense, depth is naturally related to the progressive migration of difficult samples from an in-hull state to an out-of-hull or separable state.

Overall, the directional LSM provides a geometric diagnostic for neural network design.
The value of \(s(A,B)\) indicates the overall amount of affine nonseparability.
The decomposition in Definition~\ref{def:subsets_B} indicates whether the difficult samples are in-hull or out-of-hull.
The linear layer provides projection coordinates, while the GGLU uses its region-dependent displacement structure to preserve margins, recover out-of-hull samples, or create convex-combination defects.
Therefore, the relationship between neuron number, depth, and separability is strongly data-dependent.
The principles above should be viewed as design guidance rather than necessary architectural laws.